\newtheorem{lemma}{Lemma}
\newtheorem{cor}{Corollary}
\newcommand{\cL}{\mathcal{L}}
\newcommand{\cN}{\mathcal{N}}
\newcommand{\g}{\, | \,}
\def\adl@drawiv#1#2#3{%
        \hskip.5\tabcolsep
        \xleaders#3{#2.5\@tempdimb #1{1}#2.5\@tempdimb}%
                #2\z@ plus1fil minus1fil\relax
        \hskip.5\tabcolsep}
\newcommand{\cdashlinelr}[1]{%
  \noalign{\vskip\aboverulesep
           \global\let\@dashdrawstore\adl@draw
           \global\let\adl@draw\adl@drawiv}
  \cdashline{#1}
  \noalign{\global\let\adl@draw\@dashdrawstore
           \vskip\belowrulesep}}
\newcommand{\bas}[1]{\begin{align*}#1\end{align*}}
\newcommand{\ba}[1]{\begin{align}#1\end{align}}
\newcommand{\norm}[1]{\left\lVert#1\right\rVert}
\newcommand{\idx}[1]{\{1,2,\cdots,#1\}}
\newcommand{\distas}[1]{\mathbin{\overset{#1}{\kern\z@\sim}}}
\newcommand{\indep}{\rotatebox[origin=c]{90}{$\models$}}
\newcommand{\var}[1]{\textrm{Var}\left[#1\right]}
\newcommand{\Dmat}{{\bf D}}
\newcommand{\Imat}{{\bf I}}
\newcommand{\Ymat}[0]{{{\textbf Y}}}
\newcommand{\xv}{\boldsymbol{x}}
\newcommand{\zv}{\boldsymbol{z}}
\newcommand{\cdotv}{\boldsymbol{\cdot}}
\newcommand{\betav}[0]{{\boldsymbol{\beta}}}
\newcommand{\phiv}{\boldsymbol{\phi}}
\newcommand{\psiv}{\boldsymbol{\psi}}
\newcommand{\bR}{\mathbb{R}}
\newcommand{\Z}{\mathcal{Z}}
\newcommand{\bI}{\mathbf{I}}
\renewenvironment{abstract}%
{%
  \vskip 0.075in%
  \centerline%
  {\large\bf Abstract}%
  \vspace{0.5ex}%
  \begin{quote}%
}
{
  \par%
  \end{quote}%
  \vskip 1ex%
}
\DeclarePairedDelimiter\ceil{\lceil}{\rceil}
\newtheorem{theorem}{Theorem}
\DeclareMathOperator*{\argmin}{arg\,min}
\crefname{cor}{corollary}{corollary}
\Crefname{cor}{Corollary}{Corollary}
\definecolor{hl_color}{gray}{0.9}
\newcommand{\whitecell}{\cellcolor{white}}
\newcommand{\graycell}{\cellcolor{hl_color}}
\def\bb{\textcolor{blue}}
\newcommand*\samethanks[1][\value{footnote}]{\footnotemark[#1]}
\title{Probabilistic Conformal Prediction Using \\Conditional Random Samples}
\author{Zhendong Wang \thanks{The first three authors contributed equally.}  \thanks{University of Texas at Austin} \\
   \texttt{zhendong.wang@utexas.edu} \\
  \and
  Ruijiang Gao\samethanks[1] \samethanks[2]\\
   \texttt{ruijiang@utexas.edu} \\
  \and
  Mingzhang Yin\samethanks[1] \thanks{Columbia University}\\
  \texttt{my2674@columbia.edu} \\
  \and
  Mingyuan Zhou \samethanks[2]\\
  \texttt{mingyuan.zhou@mccombs.utexas.edu} \\
  \and
  David M. Blei \samethanks[3] \\ 
  \texttt{david.blei@columbia.edu}\\
}
\begin{document}

\maketitle

\begin{abstract}

This paper proposes probabilistic conformal prediction (PCP), a predictive inference algorithm that estimates a target variable by a discontinuous predictive set. Given inputs, PCP construct the predictive set based on random samples from an estimated generative model. It is efficient and compatible with either explicit or implicit conditional generative models. Theoretically, we show that PCP guarantees correct marginal coverage with finite samples. Empirically, we study PCP on a variety of simulated and real datasets. Compared to existing methods for conformal inference, PCP provides sharper predictive sets.

\end{abstract}

\section{Introduction} \label{sec:intro}

A core problem in supervised machine learning (ML) is to predict a target variable $Y \in \mathcal{Y}$ given a vector of inputs $X \in \bR^p$. In this problem, a predictive function $q(Y \g X)$ is fitted on an observed dataset $\Dmat = \{(X_i, Y_i)\}_{i=1}^N$ and then used to predict the target $Y_{N+1}$ of a new data point with inputs $X_{N+1}$. While much of machine learning focuses on point predictions of $Y$, the problem of predictive inference aims at more robust prediction.  In predictive inference, our goal is to create a \textit{predictive set} that is likely to contain the unobserved target \citep{geisser1993predictive}. %

In particular, the field of \textit{conformal inference} develops predictive inference algorithms that aim for calibrated coverage probabilities \citep{papadopoulos2002inductive,vovk2005algorithmic}.  Assume the data pairs $(X_i, Y_i)$ are sampled independent and identically distributed (iid) from a population distribution $\mathbb{P}(X,Y)$.  Given an input $X$, a conformal inference algorithm provides a set $C_{\alpha}(X)$ such that
\ba{ 
\mathbb{P}_{X,Y}(Y \in \hat{C}_{\alpha}(X)) \geq 1-\alpha.
\label{eq:coverage}
}
The scalar $\alpha \in [0,1]$ is a predefined miscoverage rate and $\hat{C}_{\alpha}(X) \subset \mathcal{Y}$ is the predictive set. A set that  satisfies \Cref{eq:coverage} is called a \textit{valid} predictive set. Since the trivial set $\hat{C}_{\alpha}(X) = \mathcal{Y}$ is valid, one goal of conformal inference is to keep the size of the predictive set small and (thus) informative. This property is known as \textit{sharpness} \citep{dorn2022sharp,lei2015conformal}. In this paper, we develop a new method for conformal inference that produces valid and sharp predictive sets.

Existing conformal inference methods often produce a continuous interval as the predictive set \citep{Barber2019-rl,Chernozhukov2021DistributionalCP,lei2014distribution,messoudi2021copula,romano2019conformalized,Sesia2021-ei}. Such intervals are appropriate in some predictive situations.  However, consider a target distribution with separated high-density regions. In this setting, validity comes at the cost of sharpness~\citep{Hoff2021-hk}: to ensure validity the set must include all of the high-density regions; but since it's continuous it must also include the low-density regions between them.

\begin{figure}[t]
    \centering
     \begin{subfigure}[b]{0.19\textwidth}
         \includegraphics[width=\textwidth]{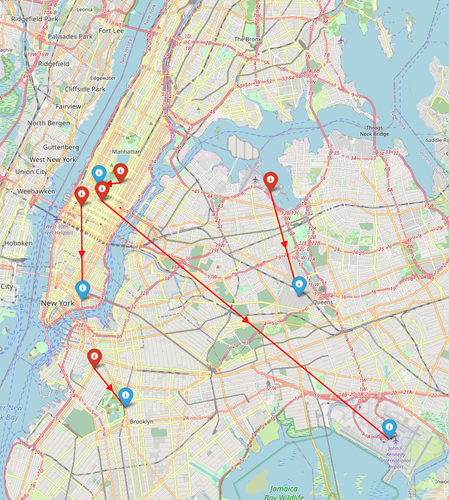}
     \caption{Sample Data}
     \end{subfigure}     
     \begin{subfigure}[b]{0.19\textwidth}
         \includegraphics[width=\textwidth]{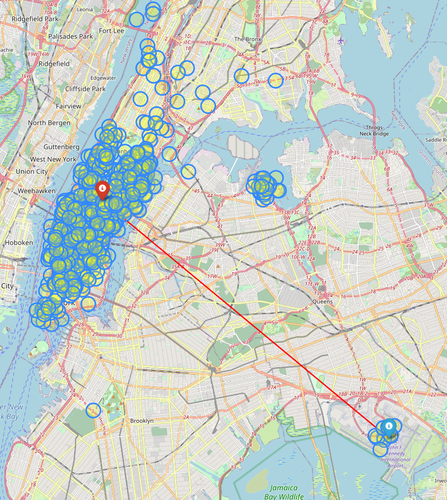}
     \caption{PCP (Ours)}
     \end{subfigure} 
     \begin{subfigure}[b]{0.19\textwidth}
         \includegraphics[width=\textwidth]{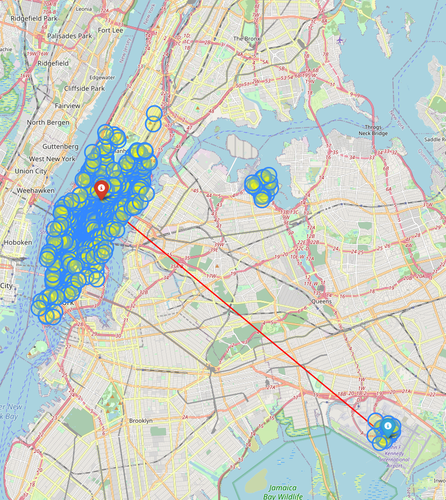}
     \caption{HD-PCP (Ours)}
     \end{subfigure}
     \begin{subfigure}[b]{0.19\textwidth}
         \includegraphics[width=\textwidth]{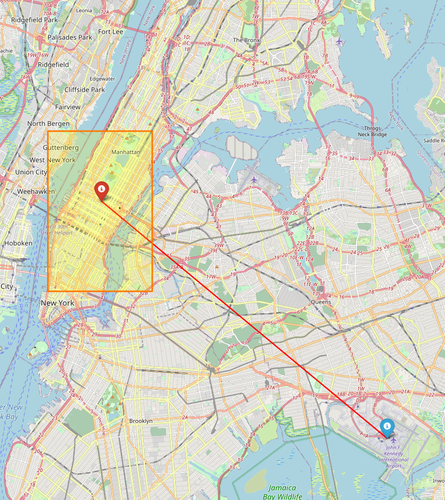}
     \caption{CDSplit \citep{izbicki2020flexible}}
     \end{subfigure}
     \begin{subfigure}[b]{0.19\textwidth}
         \includegraphics[width=\textwidth]{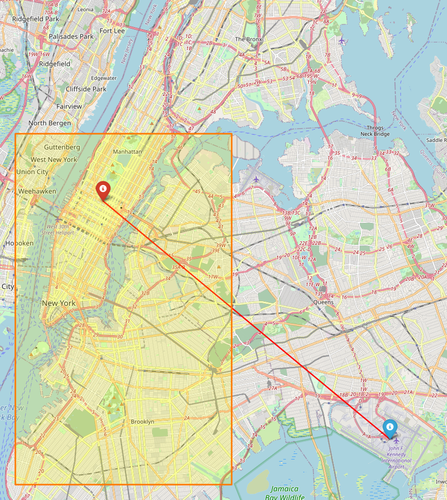}
     \caption{CHR \citep{Sesia2021-ei}}
     \end{subfigure} 
    \caption{NYC Taxi data. The covariates are pickup location (red pin) and other passenger information; The targe is the dropoff location (blue pin). Left to right: Five random samples from Travel Data; Predictive sets output by PCP, HD-PCP, CDSplit \citep{izbicki2020flexible} and CHR \citep{Sesia2021-ei} for one travel record.}
    \label{fig:taxiaddtion}
\end{figure}

For example, consider a prediction problem that estimates the drop-off location of a taxi passenger based on the passenger's information.  The target distribution is likely to be multimodal, centered around locations such as tourist attractions and transit centers \citep{Trippe2018-vp}. A continuous predictive set will have to emcompass these regions, regardless of how far apart they are.  A more informative set would contain the regions themselves, but not the areas between them. Other examples of multimodal targets include the effects of a stroke on brain regions \citep{gillmann2021visualizing}, and the rewards of  actions of a robot \citep{myers2022learning}.

\Cref{fig:taxiaddtion} provides an example of the taxi application. Panel (a) illustrates the data, the destinations of New York taxi passengers.  Given a new set of inputs, panels (d) and (e) show existing methods for conformal inference, which predict large regions for the possible destination.  Panels (b) and (c) show the results of our algorithms, which form sharper predictive sets from distinct subregions of the map.  Our method is called \textit{probabilistic conformal prediction} (PCP). \Cref{fig:flow_chart} illustrates the algorithm.

In more detail, PCP builds on the split conformal prediction framework \citep{lei2014distribution,papadopoulos2002inductive}. It begins by randomly splitting the observed data $\Dmat$ into a preliminary set $\Dmat_{\text{pre}}$ and a calibration set $\Dmat_{\text{cal}}$.  It then has three stages. (1) It fits a conditional generative model $q(Y \g X)$ to the preliminary  data $\Dmat_{\text{pre}}$.  (2) For each point $(X_i, Y_i)$ in the calibration set $\Dmat_{\text{cal}}$, it generates $K$ independent samples of preditions $\hat{\Ymat}_{X_i}= \{\hat{Y}_{i1}, \cdots, \hat{Y}_{iK}\}$  from the fitted model $q(Y \g X_i)$. It then calculates the distance between each sampled predition and the true label $Y_i$. These quantities are called the \emph{nonconformity scores} and measure the goodness-of-fit of the generative model.  (3) Finally, it calculates and records the $(1-\alpha)$ empirical quantile of the nonconformity scores. These will be used to for the predictive sets.

With these calculations in place, PCP can form the predictive set of a new datapoint.  First it generates sampled predictions from the fitted target distribution. Then each sample is expanded to a ball that centers at its  point and has a radius equal to the quantile computed from the calibration set. Finally, the predictive set is defined as the union of the balls over the samples. Because it is centered at high-density regions, this predictive set is sharp.  Further, as we prove below, it is valid.

There are several advantages to PCP (and a related extension, high-density PCP). First, it adapts automatically to the landscape of the target distribution, providing sharp and valid predictive sets regardless of the underlying distribution.  Second, the generative model for PCP may have an explicit or implicit density function as long as the random samples can be generated from it. Without requiring an explicit density, PCP is compatible with the likelihood-free prediction \citep{alsing2019fast,chan2018likelihood} and is less prone to model misspecification  \citep{mirza2014conditional,Yin2018SemiImplicitVI}. Last, (HD-)PCP can be applied to multi-target regression where the target variable $Y \in \mathcal{Y} = \bR^T$, $T \geq 1$ \citep{breiman1997predicting,messoudi2021copula}. As we shall see, (HD-)PCP scales efficiently with the target dimension and creates a sharp predictive set by capturing the targets' dependencies.

\paragraph{Related Work.} PCP provides a contribution to the growing field of conformal inference. Some conformal inference methods are based on predicting summary statistics of the target distribution, for example, by fitting a mean response function \citep{lei2018distribution,shafer2008tutorial}, conditional quantile functions \citep{romano2019conformalized} and approximate histograms \citep{Sesia2021-ei}. However, these methods produce a single continuous interval as the predictive set, which might be too loose for predicting multimodal targets. %

Other conformal inference algorithms estimate the full target distribution. Distributional conformal prediction (DCP) is based on the estimated cumulative density function \citep{Chernozhukov2021DistributionalCP} but its prediction is often sensitive to the tail estimation \citep{Sesia2021-ei}. CDSplit uses a level set of the estimated probability density function as the predictive set \citep{izbicki2020flexible}. Similar to PCP, CDSplit can produce discontinuous predictive sets. However, the level set might be loose when the distribution has high dispersion and it has to be computed approximately. 
Thanks to its sampling-based design, PCP is more computationally efficient than CDSplit, and further it is compatible with likelihood-free predictions \citep{alsing2019fast}. Empirically, across multiple datasets, PCP creates sharper predictive sets than these existing conformal methods.

Finally, there are a few conformal methods for multi-target regression~\citep{messoudi2020conformal,messoudi2021copula,neeven2018conformal}. 
Compared to these methods, PCP models the target variables jointly and can produce discontinuous predictive sets. As we show in the empirical studies, PCP provides sharper and more interpretable predictions.

\begin{figure}[t]
    \centering
    \includegraphics[width=\textwidth]{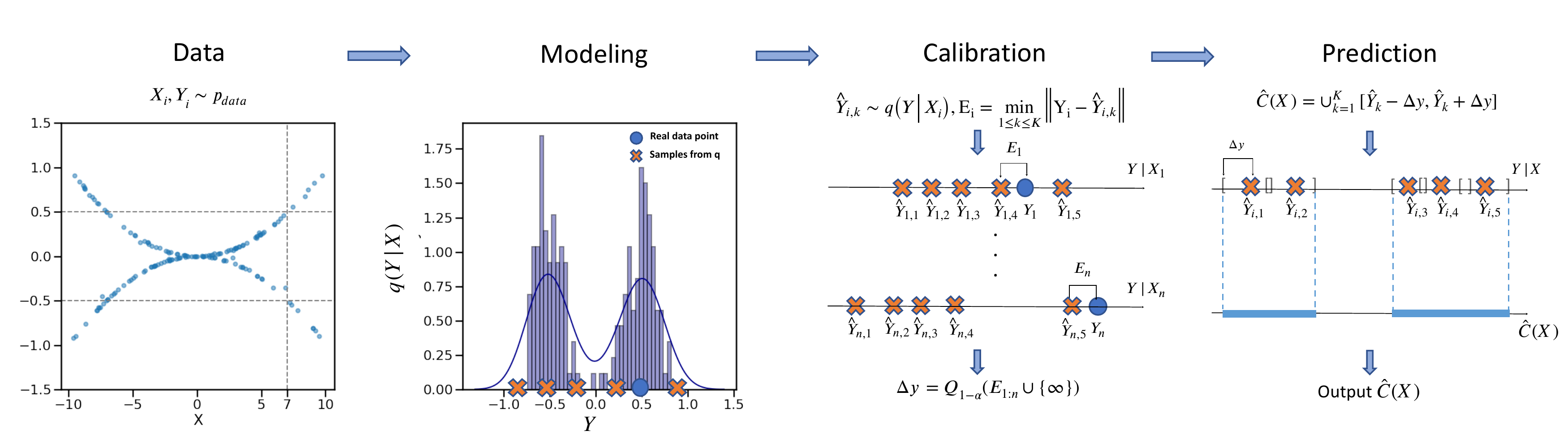}
    \caption{Illustration of the stages of PCP. Data: i.i.d data $\Dmat = \{(X_i, Y_i)\}_{i=1}^N$; Modeling: generate K random samples from a fitted $q(Y\g X)$; Calibration: compute scores $E_i$ and the quantile $\Delta y$;  Prediction: create the predictive set $\hat{C}(X)$ for a test data.}
    \label{fig:flow_chart}
\end{figure}

\section{Probabilistic conformal prediction}

\subsection{Problem setup}

Consider i.i.d. pairs of covariates $X_i$ and a target variable $Y_i$, i.e. $\Dmat = \{(X_i, Y_i)\}_{i=1}^N$, from an underlying distribution. We observe data $\Dmat$ and the covariates $X_{N+1}$ of a new data point. The goal is to form a predictive set $\hat{C}(X_{N+1})$ for the unobserved target $Y_{N+1}$ with valid uncertainty estimation. Specifically, we create a predictive set $\hat{C}_{\alpha}(\cdot): \mathcal{X} \mapsto \mathcal{Y}$ that satifies \Cref{eq:coverage} for $\alpha \in [0,1]$. %
Since an arbitrary wide predictive set has valid coverage, a predictive set should be as sharp as possible. %

Classic conformal prediction is based on leave-one-out estimation \citep{vovk2005algorithmic}, which has high computational cost due to multiple model fitting. In this paper, we adopt the split conformal prediction framework, which improves computational efficiency by data-splitting \citep{lei2018distribution,papadopoulos2002inductive}. It randomly splits the observed data to a preliminary set and a calibration set. The model is fit on the preliminary set and kept fixed in computing the nonconformity scores on the calibration set and the test set. %

\subsection{Generative model fitting} \label{sec:cde}

Our proposed PCP depends on random samples from a conditional generative model $q(Y | X)$ that approximates the target variable distribution $p(Y|X)$. This differs from standard conformal prediction methods that are based on fitting the summary statistics such as the conditional mean and quantiles of the target \citep{lei2014distribution,romano2019conformalized} and that depend on evaluating probability densities \citep{Chernozhukov2021DistributionalCP,Hoff2021-hk,izbicki2020flexible}. 
Since the only prerequisite is to sample from $q(Y|X)$, we consider both typical conditional density estimation methods with explicit density function and popular generative models with implicit density. %

PCP is compatible with a variety of CGMs, such as the Kernel Mixture Network (KMN) \citep{Ambrogioni2017}, Mixture Density Network (MixD) \citep{bishop1994mixture}, Quantile Regression Forest (QRF) \citep{Meinshausen2006QuantileRF} and implicit generative models. %
We refer to \Cref{sec:appendix_cde} for more details about fitting a CGM and generating random samples from it. We regard CGMs as backbone models for PCP.

\subsection{Uncertainty calibration with random samples} \label{sec:methodology}

Suppose a conditional density model is fit on a preliminary  data set $\Dmat_{\text{pre}}$. We use the fitted model $q(Y | X)$ and the calibration data to construct a predictive set for a new test data point. %
For a data point $(X_i, Y_i)$ in the calibration set, the algorithm first generates $K$ random samples $\hat{Y}_{ik}$, $k=1,\cdots,K$ independently from $q(Y|X_i)$, denoted as $\hat{\Ymat}_i = \{\hat{Y}_{i1}, \cdots, \hat{Y}_{iK}\}$. Then, it computes the distance from the observed outcome to this set of samples as
\ba{ 
E_i = \min_{1\leq k \leq K}~\norm{Y_i - \hat{Y}_{ik}}.
\label{eq:score}
}
The scalar $E_i$ is set as the nonconformity score. Intuitively, a small score indicates that the speculated outcomes $\hat{Y}_{ik}$ are close to the observed outcome $Y_i$, where $\hat{Y}_{ik}$ are from the approximate density $q(Y|X_i)$ and $Y_i$ is from the true underlying density $p(Y|X_i)$. Therefore, the scale of $E_i$ reflects the distance between the estimated density and the true density. 
We use the empirical quantile of the nonconformity scores to construct the predictive set. The $\alpha$-th empirical quantile is defined as $Q_{\alpha}(z_{1:n}) = \inf_{x} \{(\sum_{i=1}^n \mathbbm{1}[z_i \leq x])/n \geq \alpha\}$ where $\alpha \in [0,1]$ and $\mathbbm{1}[\cdot]$ is the indicator function. 

For a new data point with covariates $X$, we generate $\hat{\Ymat} = \{\hat{Y}_{1}, \cdots, \hat{Y}_{K}\}$ with $\hat{Y}_k \sim q(Y|X)$. Suppose that the desired nominal coverage is $1-\alpha$. Then, each sample $\hat{Y}_k$ is expanded to a region $R_{k} = \{y: \norm{y - \hat{Y}_{k}}\leq r\} $ with an arbitrary norm and a radius $r$  as the $(1-\alpha)$ quantile of the scores $\{E_1, \cdots, E_n\} \cup \{\infty\}$. We call $R_{k}$ an element region of the data point $X$. The proposed predictive set is the union of the element regions, ~\looseness=-1
\ba{ 
\hat{C}(X, \hat{\Ymat}) = \cup_{k=1}^K R_{k} = \cup_{k=1}^K \big\{y: \norm{y - \hat{Y}_k}\leq Q_{1-\alpha}(E_{1:n}\cup \{\infty\}\big\}. 
\label{eq:interval}
}
As a special case, when the outcome is a scalar, the predictive set can be written explicitly as 
\ba{ 
\hat{C}(X, \hat{\Ymat}) = \cup_{k=1}^K \Big[\hat{Y}_k - Q_{1-\alpha}(E_{1:n}\cup \{\infty\}), ~\hat{Y}_k + Q_{1-\alpha}(E_{1:n} \cup \{\infty\})\Big].
\label{eq:interval1}
}
The proposed PCP algorithm is summarized in \Cref{alg:pcp}.

As shown in \Cref{eq:interval}, the predictive set can be either continuous or discontinuous. Therefore, it can produce a sharp estimate by automatically adapting to the distributional properties of the target distribution.  When the generative model is less well fitted, PCP maintains a valid marginal coverage, properly quantifying the predictive uncertainty. When the generative model fits the target distribution well, the predictive set allocates its volume according to the random samples. For example, if  $p(Y|X)$ is multimodal and the multimodality  is captured by the estimated $q(Y|X)$,  the predictive set would consist of discontinuous sets around the modes where each set is relatively small. 

Though in some situations, a continuous interval prediction is preferred in terms of interpretability \citep{Sesia2021-ei},  when the target is multimodal, a discontinuous set might be more interpretable. For example, when predicting a watch price based on its appearance without knowing the brand, a price range $(\$100,\$200) \cup (\$1000,\$1200)$ might be more informative  than $(\$100,\$1200)$. Nevertheless, one can take the convex hull of a discontinuous set to form a continuous interval but not vice versa.  

By the construction of the predictive set defined in \Cref{eq:interval},  the estimated density $q(Y|X)$ can be explicit or implicit. %
This flexibility makes PCP compatible with a wide range of density estimators and generative models. %
Moreover, the predictive set in \Cref{eq:interval} can be computed without approximation, making PCP scalable to a high dimensional target variable $Y$.  %
Finally, PCP has a guaranteed  marginal coverage as shown in \Cref{thm:cov}.

\begin{algorithm}[!t]
  \caption{Probabilistic Conformal Prediction} %
  \label{alg:pcp}
\textbf{Input: } Data $\Dmat = \{(X_{i}, Y_{i})\}_{i=1}^{N}$, model $q(Y|X)$, nominal level $\alpha$, test point $X$, sample size $K$. 

  \vspace{0.2cm}
  \textbf{Step I: Conditional generative model}
  \begin{algorithmic}[1]
    \STATE Split the data into three folds $\Z_{\text{tr}}$, $\Z_{\text{val}}$, $\Z_{\text{cal}}$ with set of index as $\bI_{\text{tr}}$, $\bI_{\text{val}}$, $\bI_{\text{cal}}$ respectively
    \STATE Fit  $q(Y|X)$ on $\Z_{\text{tr}}$  with hyper-parameter chosen by cross validation on $\Z_{\text{val}}$
  \end{algorithmic}
  \vspace{0.2cm}
  \textbf{Step II: Predictive set for a test point}
    \begin{algorithmic}[1]
    \STATE For $i \in \bI_{\text{cal}} $, sample $\hat{Y}_{i1}, \cdots, \hat{Y}_{iK} \sim q(Y|X_i)$ 
    
    \STATE For test point $X$, sample $\hat{Y}_{1}, \cdots, \hat{Y}_{K} \sim q(Y|X)$
    
     \STATE Compute nonconformity score $ \{E_i\}_{i \in {\bI}_{\text{cal}} }$ by \Cref{eq:score}, $E_{N+1}=\infty$, $\tilde{\bI}_{\text{cal}} = \bI_{\text{cal}} \cup \{N+1\}$
     
     \STATE Set $r$ as the $(1-\alpha)$ empirical quantile of $ \{E_i\}_{i \in \tilde{\bI}_{\text{cal}} }$
 
 \STATE Compute the predictive set $\hat{C}(X, \hat{\Ymat})$ by \Cref{eq:interval} 
  \end{algorithmic}
  \textbf{Output:} Predictive set $\hat{C}(X, \hat{\Ymat})$ %
\end{algorithm}

\begin{theorem}
\label{thm:cov}
Suppose  $(X_i, Y_i)$, $i \in \idx n$ and $(X, Y)$ are exchangeable, then 
\begin{enumerate}
\item[(1)] the predictive set in \Cref{eq:interval} satisfies
\ba{ 
\mathbb{P}_{X,Y, \hat{\Ymat}}(Y \in \hat{C}(X, \hat{\Ymat})) \geq 1-\alpha;
\label{eq:lb}
}
\item[(2)] when the scores $E_1, \cdots, E_n$ are distinct almost surely, 
\ba{ 
\mathbb{P}_{X,Y, \hat{\Ymat}}(Y \in \hat{C}(X, \hat{\Ymat})) \leq 1-\alpha + \frac{1}{n+1}. 
\label{eq:ub}
}
\end{enumerate}
\end{theorem}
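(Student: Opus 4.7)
The strategy is to reduce the statement to the classical quantile lemma for exchangeable sequences: if $E_1,\ldots,E_{n+1}$ are exchangeable then the rank of $E_{n+1}$ among them is stochastically no larger than uniform on $\{1,\ldots,n+1\}$, and exactly uniform when ties have probability zero. So the whole task boils down to (i) constructing the test-point score $E_{n+1}$, (ii) verifying that the augmented scores $E_1,\ldots,E_{n+1}$ are exchangeable, and (iii) rewriting the coverage event as an event about $E_{n+1}$ relative to the empirical quantile.

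\textbf{Step 1: exchangeability of the augmented tuples.} Condition on the preliminary fold $\Dmat_{\text{pre}}$, so that the fitted model $q(Y\g X)$ is a fixed (data-dependent but now deterministic) conditional. By hypothesis the pairs $(X_i,Y_i)$ for $i\in\{1,\ldots,n\}$ together with $(X_{n+1},Y_{n+1}):=(X,Y)$ are exchangeable. Now augment each pair with a fresh independent draw of $K$ samples: $\hat{\Ymat}_i=\{\hat{Y}_{i1},\ldots,\hat{Y}_{iK}\}$ with $\hat{Y}_{ik}\sim q(\cdot\g X_i)$ independently, and likewise $\hat{\Ymat}=\{\hat{Y}_1,\ldots,\hat{Y}_K\}$ with $\hat{Y}_k\sim q(\cdot\g X)$, independent of everything else. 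Because the randomness used to generate $\hat{\Ymat}_i$ is independent across $i$ and is a measurable function of $X_i$ plus i.i.d.\ noise, the triples $\{(X_i,Y_i,\hat{\Ymat}_i)\}_{i=1}^{n+1}$ remain exchangeable. Define $E_i=\min_{1\le k\le K}\|Y_i-\hat{Y}_{ik}\|$ for every $i\le n+1$. Since $E_i$ is the \emph{same} symmetric function of the $i$-th triple, the scalar sequence $E_1,\ldots,E_{n+1}$ is exchangeable.

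\textbf{Step 2: coverage event as a score event.} The construction in \Cref{eq:interval} gives
\[
Y\in\hat{C}(X,\hat{\Ymat})\iff \min_{1\le k\le K}\|Y-\hat{Y}_k\|\le r\iff E_{n+1}\le r,
\]
where $r=Q_{1-\alpha}(E_{1:n}\cup\{\infty\})$. Note that $r$ is a symmetric function of $E_1,\ldots,E_n$ only (the $\infty$ acts as a placeholder for the unknown $E_{n+1}$); equivalently, $r$ equals $Q_{1-\alpha}(E_{1:n+1})$ on the event that $E_{n+1}$ is among the $\lceil(1-\alpha)(n+1)\rceil$ smallest scores, and equals $\infty$ on the complementary event trimmed by the placeholder. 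This is the standard split-conformal reformulation.

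\textbf{Step 3: applying the quantile lemma.} By exchangeability of $E_1,\ldots,E_{n+1}$, the rank of $E_{n+1}$ within the full sample is stochastically dominated by the uniform on $\{1,\ldots,n+1\}$; hence $\Pr(E_{n+1}\le Q_{1-\alpha}(E_{1:n}\cup\{\infty\}))\ge \lceil(1-\alpha)(n+1)\rceil/(n+1)\ge 1-\alpha$, giving \Cref{eq:lb}. For the upper bound, distinctness of $E_1,\ldots,E_n$ a.s.\ forces all $n+1$ ranks to be distinct a.s., so the rank of $E_{n+1}$ is \emph{exactly} uniform, yielding $\Pr(E_{n+1}\le r)\le\lceil(1-\alpha)(n+1)\rceil/(n+1)\le 1-\alpha+1/(n+1)$, which is \Cref{eq:ub}. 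Finally, integrate out the conditioning on $\Dmat_{\text{pre}}$ (both bounds hold conditionally, hence marginally).

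\textbf{Main obstacle.} The only delicate point is Step 1: one must be careful that augmenting each pair with $K$ \emph{synthetic} samples preserves exchangeability. The clean way to see this is to condition on $\Dmat_{\text{pre}}$ so that $q$ is deterministic, represent each $\hat{Y}_{ik}$ as $g(X_i,U_{ik})$ for i.i.d.\ latent noise $U_{ik}$, and observe that the map $(X_i,Y_i,U_{i1:K})\mapsto(X_i,Y_i,\hat{\Ymat}_i)$ is applied identically across $i$, so any permutation of indices acts coherently on all three coordinates. Once this is pinned down, the remaining quantile argument is the standard one.
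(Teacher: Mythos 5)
Your proof is correct and follows essentially the same route as the paper's: condition on the preliminary fold, observe that the augmented triples $(X_i,Y_i,\hat{\Ymat}_i)$ remain exchangeable so the scores $E_1,\ldots,E_{n+1}$ (each the same deterministic function of its triple) are exchangeable, rewrite the coverage event as $E_{n+1}\le Q_{1-\alpha}(E_{1:n}\cup\{\infty\})$ via the minimum-over-$k$ equivalence, and invoke the standard inflated-quantile lemma for both bounds before marginalizing out the training fold. The only cosmetic difference is that the paper spells out the two directions of the event equivalence explicitly and cites a lemma for the exchangeability-of-functions step, while you justify it via the noise-representation $\hat{Y}_{ik}=g(X_i,U_{ik})$; both are fine.
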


\Cref{thm:cov} demonstrates that the marginal coverage of  PCP  is tight. In particular, the condition of the upper bound is satisfied when $p(Y|X)$ is continuous with respect to the Lebesgue measure. %

In practice, we take the quantile of $E_{1:n}$ instead of the inflated scores $E_{1:n}\cup \{\infty\}$ in \Cref{eq:interval}. The following corollary offers the coverage guarantee under such modification. 

\begin{cor} 
Under the conditions of \Cref{thm:cov} and suppose $\alpha \geq 1/(n+1)$, if the quantile in \Cref{eq:interval} is $Q_{1-\alpha}(E_{1:n})$,  then $\mathbb{P}(Y \in \hat{C}(X, \hat{\Ymat})) \in [1-\alpha - 1/(n+1), 1-\alpha + 1/(n+1)]$; if the quantile in \Cref{eq:interval} is $Q_{(1-\alpha) (1+\frac1n)}(E_{1:n})$, then $\mathbb{P}(Y \in \hat{C}(X, \hat{\Ymat})) \in [1-\alpha, 1-\alpha +1/(n+1)]$
\label{cor:cov}
\end{cor}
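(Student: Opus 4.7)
The corollary reduces entirely to identifying each quantile with an order statistic of the scores and invoking \Cref{thm:cov}. Writing $k_n := \lceil(1-\alpha)n\rceil$ and $k_{n+1} := \lceil(1-\alpha)(n+1)\rceil$, the paper's quantile convention gives $Q_{1-\alpha}(E_{1:n}) = E_{(k_n)}$ and $Q_{1-\alpha}(E_{1:n}\cup\{\infty\}) = E_{(k_{n+1})}$ (with the convention $E_{(n+1)} := \infty$); since $(1-\alpha)(1+1/n)\cdot n = (1-\alpha)(n+1)$, one also has $Q_{(1-\alpha)(1+1/n)}(E_{1:n}) = E_{(k_{n+1})}$. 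The hypothesis $\alpha \geq 1/(n+1)$ forces $(1-\alpha)(n+1)\leq n$, hence $k_{n+1}\leq n$ and this last quantile is finite. The second radius in the corollary therefore coincides exactly with the inflated radius used in \Cref{thm:cov}, so the bounds $\mathbb{P}(Y\in\hat{C}(X,\hat{\Ymat}))\in[1-\alpha,\,1-\alpha+1/(n+1)]$ transport verbatim.

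For the first choice of quantile, I would use the elementary rank bound $k_n\leq k_{n+1}\leq k_n+1$, which follows from the chain $(1-\alpha)n\leq (1-\alpha)(n+1)\leq (1-\alpha)n+1$. The left inequality gives $Q_{1-\alpha}(E_{1:n}) = E_{(k_n)} \leq E_{(k_{n+1})} = Q_{1-\alpha}(E_{1:n}\cup\{\infty\})$, so the predictive set built from $Q_{1-\alpha}(E_{1:n})$ is a subset of the set analyzed in \Cref{thm:cov}, and the upper bound $1-\alpha+1/(n+1)$ is inherited from part (2) of that theorem.

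For the matching lower bound I would invoke part (1) of \Cref{thm:cov} at the inflated miscoverage level $\alpha' := \alpha + 1/(n+1)$. Using $\lceil x-1\rceil = \lceil x\rceil -1$, one has $\lceil (1-\alpha')(n+1)\rceil = k_{n+1} - 1 \leq k_n$, so $Q_{1-\alpha'}(E_{1:n}\cup\{\infty\}) = E_{(k_{n+1}-1)} \leq E_{(k_n)} = Q_{1-\alpha}(E_{1:n})$; hence the corollary's predictive set contains the set from \Cref{thm:cov} at level $\alpha'$ and thus inherits coverage at least $1-\alpha'= 1-\alpha - 1/(n+1)$. The only delicate point is the ceiling-function bookkeeping, together with the harmless edge case $\alpha'\geq 1$ (where the claimed lower bound is already nonpositive and no appeal to \Cref{thm:cov} is needed); apart from this, no new probabilistic content is required beyond \Cref{thm:cov}.
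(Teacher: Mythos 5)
Your proposal is correct and follows essentially the same route as the paper: both arguments reduce each quantile to an order statistic of the scores, identify $Q_{(1-\alpha)(1+\frac1n)}(E_{1:n})$ with the inflated quantile $Q_{1-\alpha}(E_{1:n}\cup\{\infty\})$ used in \Cref{thm:cov}, and obtain the $\pm 1/(n+1)$ corrections for the uninflated quantile by shifting the ceiling index by one. Your version just makes explicit the rank bookkeeping ($k_n \leq k_{n+1}\leq k_n+1$) and the edge case $\alpha+1/(n+1)\geq 1$ that the paper's final step leaves implicit.
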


By \Cref{cor:cov}, if we take the $1-\alpha$ quantile of $E_{1:n}$, we may lose the coverage probability by $1/(n+1)$ where $n$ is the number of data in the calibration set. Instead, we can take the $(1-\alpha) (1+\frac1n)$ quantile of $E_{1:n}$ to maintain $(1-\alpha)$ coverage when $\alpha \geq 1/(n+1)$.

\textbf{High Density Probabilistic Conformal Prediction (HD-PCP).} Ideally, we may want the predictive sets to contain only high density regions to offer informative predictions. %
As shown in~\Cref{sec:hdpcpfigure}, for different sets covering a specific probability of a multimodal distribution with the same marginal coverage, the high density region has the smallest size. 

In PCP, the generated random samples include low density samples. When $K$ increases, the set size of the low density region will decrease. However, PCP may generate many isolated sets and make interpretation difficult for practitioners. 
To mitigate this problem, we propose High Density Probabilistic Conformal Prediction (HD-PCP) to 
filter out $\beta$ fraction low-density samples to identify the high density regions when $q(Y|X)$ is explicit. %
Instead of sampling $K$ samples from $q(Y|X)$ like in PCP, we sample $\ceil*{K/(1-\beta)}$ samples for each $X$, keep $1-\beta$ fraction of samples with the highest estimated density, and keep all other parts of the algorithm the same as PCP. %
The HD-PCP algorithm is summarized in Appendix \Cref{alg:hd_pcp}.
The marginal coverage guarantee still holds for HD-PCP, as shown in Corollary~\ref{cor:hdpcp}.  ~\looseness=-1
\begin{cor} 
Under the conditions of \Cref{thm:cov}, HD-PCP has the same marginal coverage as PCP. 
\label{cor:hdpcp}
\end{cor}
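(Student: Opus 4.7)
The plan is to reduce Corollary~\ref{cor:hdpcp} to Theorem~\ref{thm:cov} by showing that the density-based filtering step in HD-PCP preserves the exchangeability of the nonconformity scores, which is the only property Theorem~\ref{thm:cov} actually exploits. First I would isolate the structural fact underlying Theorem~\ref{thm:cov}: its coverage bound follows from the joint exchangeability of $E_1,\dots,E_n,E_{n+1}$, where $E_{n+1}$ is the score of the test point computed by the same recipe and satisfies $E_{n+1}\leq r$ if and only if $Y_{n+1}\in\hat{C}(X_{n+1},\hat{\Ymat}_{n+1})$. Once this is stated cleanly, the task reduces to verifying exchangeability of the HD-PCP scores.

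Next I would rewrite each HD-PCP score as $E_i=\phi(X_i,Y_i,U_i)$ for a single measurable map $\phi$ that is identical across indices. Here $U_i$ collects the auxiliary randomness at datum $i$: the $\lceil K/(1-\beta)\rceil$ i.i.d.\ draws from $q(\cdot \mid X_i)$ and any tie-breaking bits for the density ranking. The ``keep the top-$K$ by $q(\cdot\mid X_i)$'' operation is a deterministic function of $X_i$ together with that batch alone, so $\phi$ is well-defined and applied pointwise. Because the pairs $(X_i,Y_i)_{i=1}^{n+1}$ are exchangeable by assumption and the $U_i$ are i.i.d.\ and independent of the data (and of the fixed, preliminary-fit model $q$), the tuples $(X_i,Y_i,U_i)$ are jointly exchangeable; hence so are the scores $\bigl(\phi(X_i,Y_i,U_i)\bigr)_{i=1}^{n+1}$.

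With exchangeability of the HD-PCP scores in hand, the quantile argument of Theorem~\ref{thm:cov} transfers verbatim: the rank of $E_{n+1}$ among $E_1,\dots,E_{n+1}$ is sub-uniform, which yields $\mathbb{P}(E_{n+1}\leq Q_{1-\alpha}(E_{1:n}\cup\{\infty\}))\geq 1-\alpha$, equivalently $\mathbb{P}(Y\in\hat{C}(X,\hat{\Ymat}))\geq 1-\alpha$. The only step that deserves care is the second one: one has to confirm that the density filter truly depends on $X_i$ and its own sample batch, not on any quantity aggregated across data points. A global density threshold, or a threshold calibrated using the whole dataset, would couple the indices and destroy exchangeability. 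Because the HD-PCP filter is defined per-$X_i$ using the preliminary-set-fitted $q$, which is held fixed during calibration and prediction, this locality holds, and no calculation beyond invoking Theorem~\ref{thm:cov} is required.
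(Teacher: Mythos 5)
Your proposal is correct and matches the paper's own argument: the paper likewise writes the HD-PCP score as a deterministic per-datum function $g(D_i)$ of the i.i.d.\ tuple $D_i=(X_i,Y_i,\hat{\Ymat}_i)$ (with the filtering absorbed into $g$), invokes exchangeability of the resulting scores, and then reuses the proof of \Cref{thm:cov}. Your added observations about tie-breaking randomness and the necessity of the filter being local to each $X_i$ (no cross-datum threshold) are sound refinements of the same argument, not a different route.
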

The proofs of the theorems and corollaries are presented in \Cref{sec:proof}.

\section{Experiments} \label{sec:experiment}

In this section, we conduct a comprehensive analysis demonstrating the advantages of PCP compared to previously proposed conformal inference methods. We aim to answer the following \textit{questions}: \textbf{(a)} how does PCP perform in terms of coverage and predictive set size when compared with baseline models on synthetic datasets, that has  multimodal $p(Y|X)$ distributions? \textbf{(b)} Does the filtering technique improve the predictive set of PCP? \textbf{(c)} How well do PCP and HD-PCP perform on real datasets with a single target? \textbf{(d)} How do the backbone models impact the performance of PCP?  \textbf{(e)} Does PCP provide better predictive sets in tasks with multi-dimensional targets? The code for the simulations are available at \url{https://github.com/Zhendong-Wang/Probabilistic-Conformal-Prediction}.

Our experiments are structured into three sections. We first conduct experiment on classic 2D synthetic data to answer question \textbf{(a)} and \textbf{(b)}. Then, we compare PCP and HD-PCP with a full set of baseline methods on several selected real datasets to address question \textbf{(b)}, \textbf{(c)} and \textbf{(d)}. Finally, we conduct experiments on multi-dimensional regression tasks to address question \textbf{(e)}. We run all our experiements on machines with AMD EPYC 7763 CPUs.%

\textbf{Baselines.} we consider CHR \citep{Sesia2021-ei}, DistSplit \citep{izbicki2020flexible}, CDSplit \citep{izbicki2020flexible}, DCP \citep{Chernozhukov2021DistributionalCP}, and CQR \citep{romano2019conformalized} as our comparison baselines. For CHR, we use two different conditional density estimation models based on neural network model and random forest model, and we denote them as CHR-NN and CHR-QRF. We evaluate all the mentioned baselines based on their public Github implementation except for CDSplit. We implement python-based CDSplit based on their official R implementation to use the same backbone generative model for a fair comparison, denoted as CDSplit-KMN and CDSplit-MixD.

\textbf{Choosing the hyperparameter K.} 
We conduct an ablation study on the effect of the sample size $K$ of PCP. 
As shown in \Cref{fig:ablation_K}, empirically we find when K increases, the average size of the predictive sets reduces fast first and then gets slow. In practice, we set K moderately large to balance the sharpness and the computational cost, i.e., $K=40$ or $K=1000$ (two-dimensional targets).

\begin{figure}
    \centering
    \includegraphics[width=\textwidth]{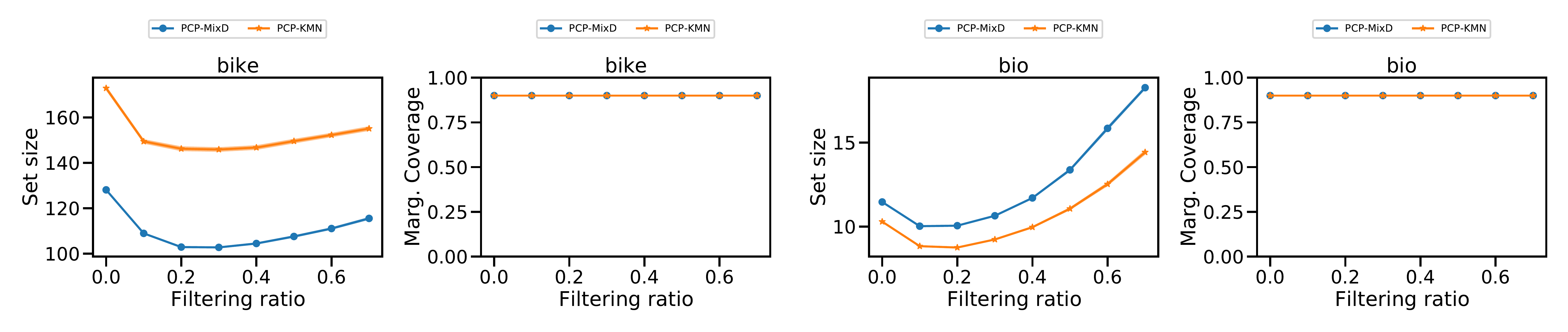} \\
    \includegraphics[width=\textwidth]{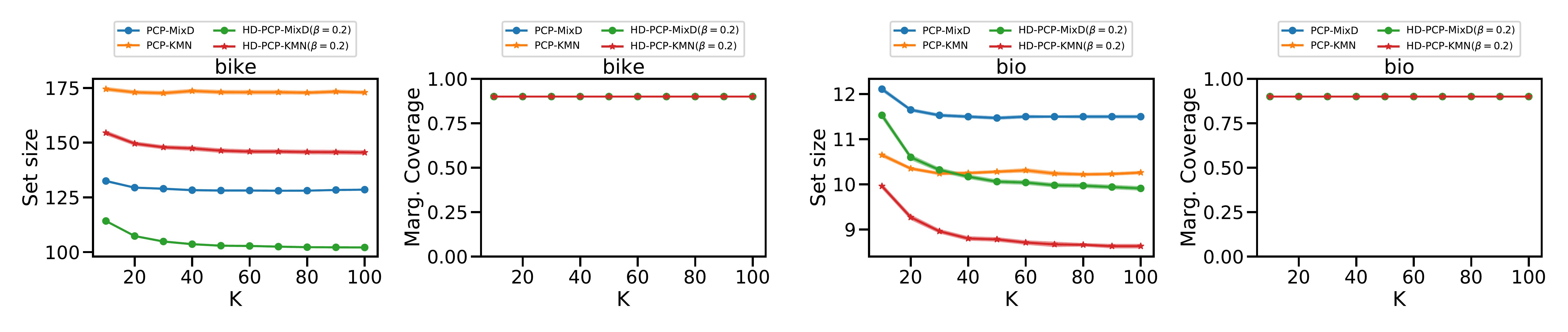}
    \caption{Ablation study on choosing hyperparameter K and filtering ratio $\beta$. We run experiments of PCP and HD-PCP with MixD and KMN as backbone models on two datasets, bike and bio. The K grid is $[10, 20, 30, \dots, 100]$ and the $\beta$ grid is $[0.1, 0.2, \dots, 0.7]$. In the first row, we show predictive set sizes with $K=50$ and varying $\beta$. Second row shows  how predictive set size varies with $K$.}
    \label{fig:ablation_K}
\end{figure}

\subsection{Synthetic data experiments} \label{sec:synthetic_exp}

To evaluate the effectiveness of proposed methods, we compare the predictive set of PCP and HD-PCP with other baseline methods on classic 2D synthetic data: s-curve, half-moons, 25-Gaussians, 8-Gaussians, circle and swiss-roll. We show the evaluation results of the s-curve and the 25-Gaussians in \Cref{fig:synthetic} and place detailed results in Appendix \ref{appendix:synthetic}.

\Cref{fig:synthetic} illustrates, for dataset that has multimodal $p(Y|X)$ distribution, models that consider multimodality, such as CDSplit and (HD-)PCP, works apparently better than the models that can only provide unimodal predictions. This is consistent with our discussion in \Cref{sec:intro}. %
Quantitatively, all models achieve the target marginal coverage ($1-\alpha$), while the average set sizes of CDSplit and (HD-)PCP are several times smaller than that from CHR. We demonstrate that CDSplit and PCP both can provide sharp and informative predictive sets for these multimodal datasets and PCP is slightly better with respect to the set size. The right two panels show the effect of filtering high density samples. The predictive sets from HD-PCP become cleaner and concentrated on the correct modes. Correspondingly, the average set sizes of HD-PCP drop a lot compared to PCP. The histogram moving from blue bins to orange bins also demonstrates the effectiveness of the filtering.

\begin{figure}[!t]
    \centering
    \includegraphics[width=\textwidth]{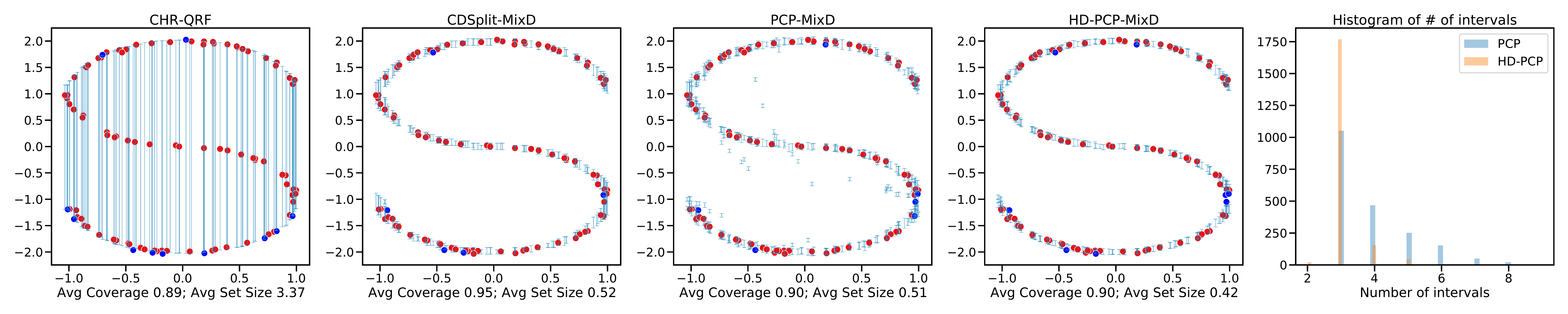} \\
    \includegraphics[width=\textwidth]{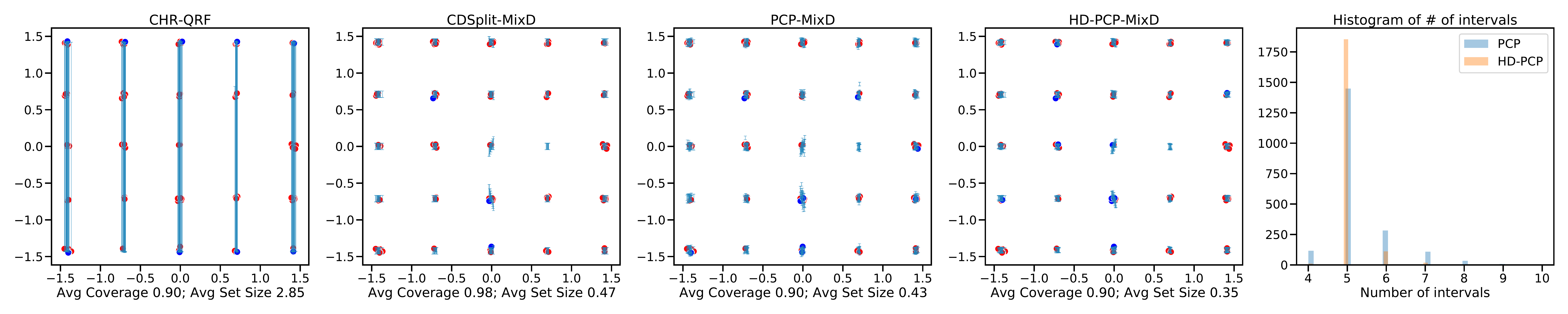} \\
    \caption{ Visualization of predictive sets ($\alpha=0.1$) on 2D toy datasets: s-curve and 25-Gaussians. We shows the predictive sets on 100 test data samples. Blues lines: the predictive sets from each method; Blue dots: test points that are not covered by the predictive sets; Reds dots: test points covered. We report the marginal coverage and the average set size across test datapoints in the x-axis label. The fifth column shows the histogram of the number of predicted intervals of PCP and HD-PCP. We set $K=40$ for (HD-)PCP, $\beta=0.2$. Detailed experiments are in \Cref{appendix:synthetic}.}
    \label{fig:synthetic}
\end{figure}

\subsection{Real data experiments}

We study regression tasks on several real data sets to evaluate the effectiveness of PCP and HD-PCP \citep{Sesia2021-ei}. We consider multiple types of generative models $q(Y \g X)$, including implicit models (GAN \cite{Goodfellow2014GenerativeAN}) and semi-implicit model \citep{wang2020thompson,Yin2018SemiImplicitVI,yin2019semi}, explicit models (KMN \citep{Ambrogioni2017}, MixD \citep{bishop1994mixture}), and QRF \cite{Meinshausen2006QuantileRF, Sesia2021-ei}. We denote them as PCP-GAN, PCP-SIVI, PCP-KMN, PCP-MixD and PCP-QRF respectively. 

\textbf{Datasets.} We conduct real data experiments on 9 public-domain data sets: bike sharing data (bike), physicochemical properties of protein tertiary structure (bio), blog feedback (blog), and Facebook comment
volume, variants one (fb1) and two (fb2), medical expenditure panel survey number 19 (meps19), number 20 (meps20), and number 21
(meps21) \citep{romano2019conformalized} and temperature forecast data \citep{Cho2020ComparativeAO}.  \Cref{tab:dataset_splits} in \Cref{sec:full_real_exp} illustrates the dataset sizes and data splits for training, calibration and testing. 

\textbf{Evaluation Protocol.} We evaluate the marginal coverage, conditional coverage (approximated by the worst-slab conditional coverage \cite{Cauchois2021KnowingWY,Romano2020ClassificationWV}), and the predictive set size.%
We report results based on 50 random %
splits for all datasets. 

\Cref{tab:real_data} shows numerical results. For our methods, we report PCP-MixD and HD-PCP-MixD; for baselines, we report the backbone model that works generally the best across the 9 datasets with respect to set size in the main paper. 
See detailed results in \Cref{sec:full_real_exp}: \Cref{tab:real_summary} reports the best results among the variants of each method; \Cref{tab:real_data_1}, \Cref{tab:real_data_2} and \Cref{tab:real_data_3} report full experiment results. 

We  observe that all conformal methods achieve  $(1 - \alpha)$ marginal coverage and perform well in terms of the worst-slab conditional coverage. Thus, our comparison focuses on the size of predictive sets. As shown in \Cref{tab:real_data}, HD-PCP-MixD outperforms all the other baselines on 7 out of 9 datasets in terms of the predictive set size. When picking a backbone model for each dataset is allowed, our models outperform baselines on all datasets, as shown in \Cref{tab:real_summary}. Comparing HD-PCP with PCP, we find that the filtering technique leads to consistent performance improvement. \Cref{tab:real_summary} shows that PCP outperforms the baselines by a large margin especially on blog, facebook1 and facebook2 datasets. 

Moreover, note that PCP is flexible for backbone generative models, as long as the model can easily generate random conditional samples. %
The flexibility of PCP makes it achieve good performance by choosing a proper generative model according to data sets. For example, PCP-SIVI works well in bike and facebook data with implicit backbone conditional generative models, as shown in \Cref{tab:real_data_1}.

The limitation of CDSplit may be explained by noting that the method needs to make partitions of data based on K-means algorithm,  which is known to be unstable due to local minima. 
It also needs to approximate the level set on a grid of the target space to form the predictive set. It may be sensitive to the range and coarseness of the grid.    
Thus, we notice that CDSplit produces large predictive sets on facebook data and meps data. Similar to \citet{Sesia2021-ei}, we observe that DCP 
is sensitive to the estimation of the distribution tails \citep{Sesia2021-ei}, which makes it unstable for some datasets. 
CHR is more robust because it only needs to estimate a histogram with relatively few bins \citep{Sesia2021-ei}, while it could only provide a single continuous interval and produces a loose predictive set when the data exhibits mulimodality. 
CQR predicts intervals based on learned lower and upper quantiles, which leads to large intervals when the learned quantiles are not accurate or the data distribution is multimodal.

\begin{table}[ht]
    \centering
    \resizebox{\textwidth}{!}{
    \begin{tabular}{l|l|c|c|c|c|c|c|c}
         \toprule
         Data & Metric & PCP & HD-PCP & CHR \citep{Sesia2021-ei} & DistSplit \citep{izbicki2020flexible} & CDSplit \citep{izbicki2020flexible} & DCP \citep{Chernozhukov2021DistributionalCP} & CQR \citep{romano2019conformalized} \\
         \midrule
         \multirow{3}{*}{bike} & Marg. C & 0.90 & \graycell 0.90 & 0.90 & 0.90 & 0.92 & 0.90 & 0.90 \\
         & Cond. C & 0.86 & \graycell 0.88 & 0.88 & 0.87 & 0.91 & 0.88 & 0.89 \\
         & Set Size & 128.13(0.53) & \graycell \textbf{102.92(0.48)} & 204.10(1.03) & 423.13(1.51) & 115.74(0.50) & 443.76(1.36) & 403.88(0.86) \\
         \midrule
         \multirow{3}{*}{bio} & Marg. C & 0.90 & 0.90 & 0.90 & 0.90 & \graycell 0.90 & 0.90 & 0.90 \\
         & Cond. C & 0.89 & 0.90 & 0.89 & 0.89 & \graycell 0.90 & 0.89 & 0.89 \\
         & Set Size & 11.47(0.04) & {10.06(0.05)} & 10.21(0.04) & 13.19(0.04) & \graycell \textbf{9.58(0.04)} & 12.95(0.04) & 13.00(0.02) \\
         \midrule
         \multirow{3}{*}{blogdata} & Marg. C & 0.89 & \graycell 0.90 & 0.90 & 0.90 & 0.96 & 0.90 & 0.90 \\
         & Cond. C &  0.85 & \graycell 0.87 & 0.87 & 0.87 & 0.95 & 0.88 & 0.87 \\
         & Set Size &  10.78(0.17) & \graycell \textbf{9.44(0.19)} & 10.81(0.17) & 16.27(0.23) & 39.00(0.40) & 1422.36(0.03) & 15.15(0.26) \\
         \midrule
         \multirow{3}{*}{facebook1} & Marg. C & 0.90 & \graycell 0.90 & 0.90 & 0.90 & 0.95 & 0.90 & 0.90 \\
         & Cond. C &  0.82 & \graycell 0.85 & 0.86 & 0.89 & 0.95 & 0.89 & 0.88 \\
         & Set Size &  9.99(0.14) & \graycell \textbf{8.93(0.12)} & 11.21(0.12) & 14.03(0.16) & 33.69(0.16) & 1303.01(0.04) & 13.79(0.15) \\
         \midrule
         \multirow{3}{*}{facebook2} & Marg. C & 0.90 & \graycell 0.90 & 0.90 & 0.90 & 0.97 & 0.90 & 0.90 \\
         & Cond. C & 0.82 & \graycell 0.84 & 0.87 & 0.89 & 0.96 & 0.89 & 0.89 \\
         & Set Size & 9.93(0.11) & \graycell \textbf{8.84(0.10)} & 10.81(0.14) & 13.48(0.19) & 45.75(0.16) & 1963.68(0.03) & 13.00(0.17) \\
         \midrule
         \multirow{3}{*}{meps19} & Marg. C & 0.90 & \graycell 0.90 & 0.90 & 0.90 & 0.93 & 0.90 & 0.90 \\
         & Cond. C & 0.87 & \graycell 0.88 & 0.89 & 0.89 & 0.92 & 0.88 & 0.89 \\
         & Set Size & 19.28(0.16) & \graycell \textbf{17.78(0.18)} & 18.26(0.15) & 29.96(0.28) & 23.86(0.17) & 559.23(0.01) & 28.71(0.18) \\
         \midrule
         \multirow{3}{*}{meps20} & Marg. C & 0.90 &  0.90 & \graycell 0.90 & 0.90 & 0.92 & 0.90 & 0.90 \\
         & Cond. C & 0.87 &  0.88 & \graycell 0.90 & 0.90 & 0.92 & 0.88 & 0.90 \\
         & Set Size & 19.52(0.16) &  18.19(0.17) & \graycell \textbf{17.94(0.18)} & 29.35(0.23) & 22.93(0.16) & 520.25(0.01) & 27.57(0.15) \\
         \midrule
         \multirow{3}{*}{meps21} & Marg. C & 0.90 & \graycell 0.90 & 0.90 & 0.90 & 0.93 & 0.90 & 0.90 \\
         & Cond. C & 0.87 & \graycell 0.88 & 0.90 & 0.89 & 0.92 & 0.88 & 0.89 \\
         & Set Size & 19.18(0.12) & \graycell \textbf{17.91(0.15)} & 18.65(0.16) & 30.32(0.31) & 23.63(0.17) & 531.25(0.01) & 29.89(0.20) \\
         \midrule
         \multirow{3}{*}{temperature} & Marg. C & 0.90 & \graycell 0.90 & 0.90 & 0.90 & 0.92 & 0.90 & 0.90 \\
         & Cond. C & 0.90 & \graycell 0.89 & 0.89 & 0.89 & 0.91 & 0.88 & 0.87 \\
         & Set Size & 2.10(0.01) & \graycell \textbf{1.85(0.01)} & 3.24(0.01) & 3.07(0.01) & 2.23(0.01) & 3.10(0.02) & 3.55(0.03) \\
         \bottomrule
    \end{tabular}}
    \vspace{1mm}
    \caption{ Summary results of real data experiments, where Marg. C and Cond. C denote the marginal coverage and approximated conditional coverage. The results are averaged over 50 random cross-validation splits.
    We report the set size mean and standard error (inside the parentheses, the default is 0.00) based on the same 50 splits. The nominal coverage rate $(1 - \alpha)$ is 90\%, the $K$ for (HD-)PCP is set as 40. To save space and keep consistency, here we report PCP-MixD, HD-PCP-MixD, CHR-QRF, CDSplit-MixD and CQR, which include the variant that works generally the best across the 9 datasets. 
    The detailed results are placed in \Cref{sec:full_real_exp}. }
    \label{tab:real_data}
\end{table}

\subsection{Multi-dimensional targets}

Beyond single target regression tasks, we  study PCP and HD-PCP on multi-target datasets. %
We use the same strategy in~\citet{neeven2018conformal} to adapt previous baselines to multi-target  conformal algorithms by fitting each dimension separately with coverage level $(1-{\alpha})/{d}$, where $d$ is the dimension of target $Y$ (this ensures the coverage of the target vector is $1-\alpha$ \citep{Lei2020-px}). 

We construct a synthetic dataset to illustrate the benefit of PCP when targets are dependent. Covariates $X$ are sampled from $\mathcal{N}(0,1)^5$ and the target $Y$ is randomly sampled from a bi-modal Gaussian distribution $0.5\mathcal{N}(\mu_1, \Sigma)+0.5\mathcal{N}(\mu_2, \Sigma)$. $\mu_i = \beta_i^T (x,1)$ and $\beta_i$ is randomly generated from $\mathcal{N}(0,1)^6$, $\Sigma = \begin{pmatrix}
10 & \rho\\
\rho & 10
\end{pmatrix}$. %
The synthetic data distribution in Figure~\ref{fig:mdsyndata} shows that the distribution concentrates as $\rho$ increases. As shown in~\Cref{fig:mdsyndatacover} and \Cref{fig:mdsyndataarea},  PCP achieves the best performance in terms of average predictive set size. We observe when $\rho$ gets higher, only PCP shrinks the predictive set accordingly while the predictions from other methods have little change and become loose. The detailed quantitative results are in~\Cref{tab:md_syn_data}, \Cref{appendix:synthetic}.

 \begin{figure}[ht]
     \centering
     \begin{subfigure}[b]{0.52\textwidth}
         \includegraphics[width=\textwidth]{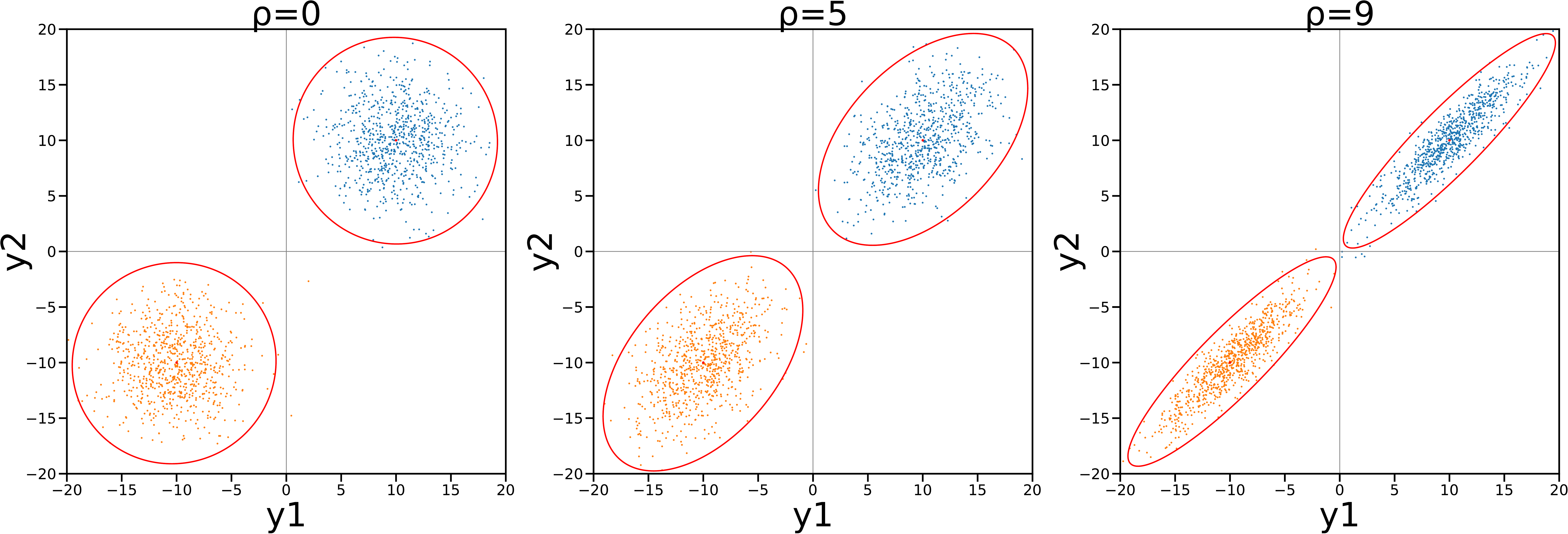}
     \caption{$p(Y|X)$ with varying $\rho$}
     \label{fig:mdsyndata}
     \end{subfigure} \hfill 
     \begin{subfigure}[b]{0.23\textwidth}
         \includegraphics[width=\textwidth]{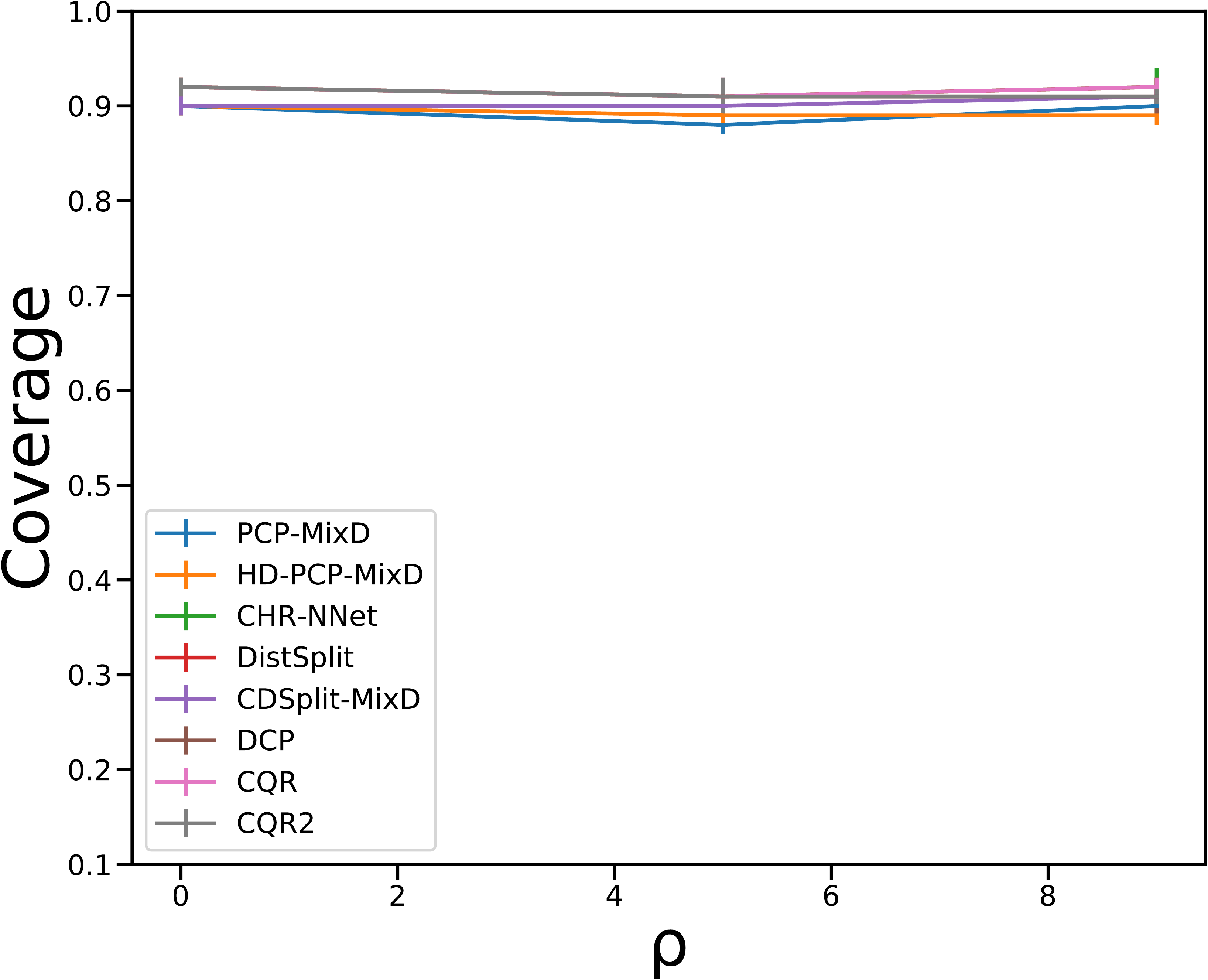}
         \caption{Coverage}
         \label{fig:mdsyndatacover}
     \end{subfigure}\hfill
     \begin{subfigure}[b]{0.23\textwidth}
         \includegraphics[width=\textwidth]{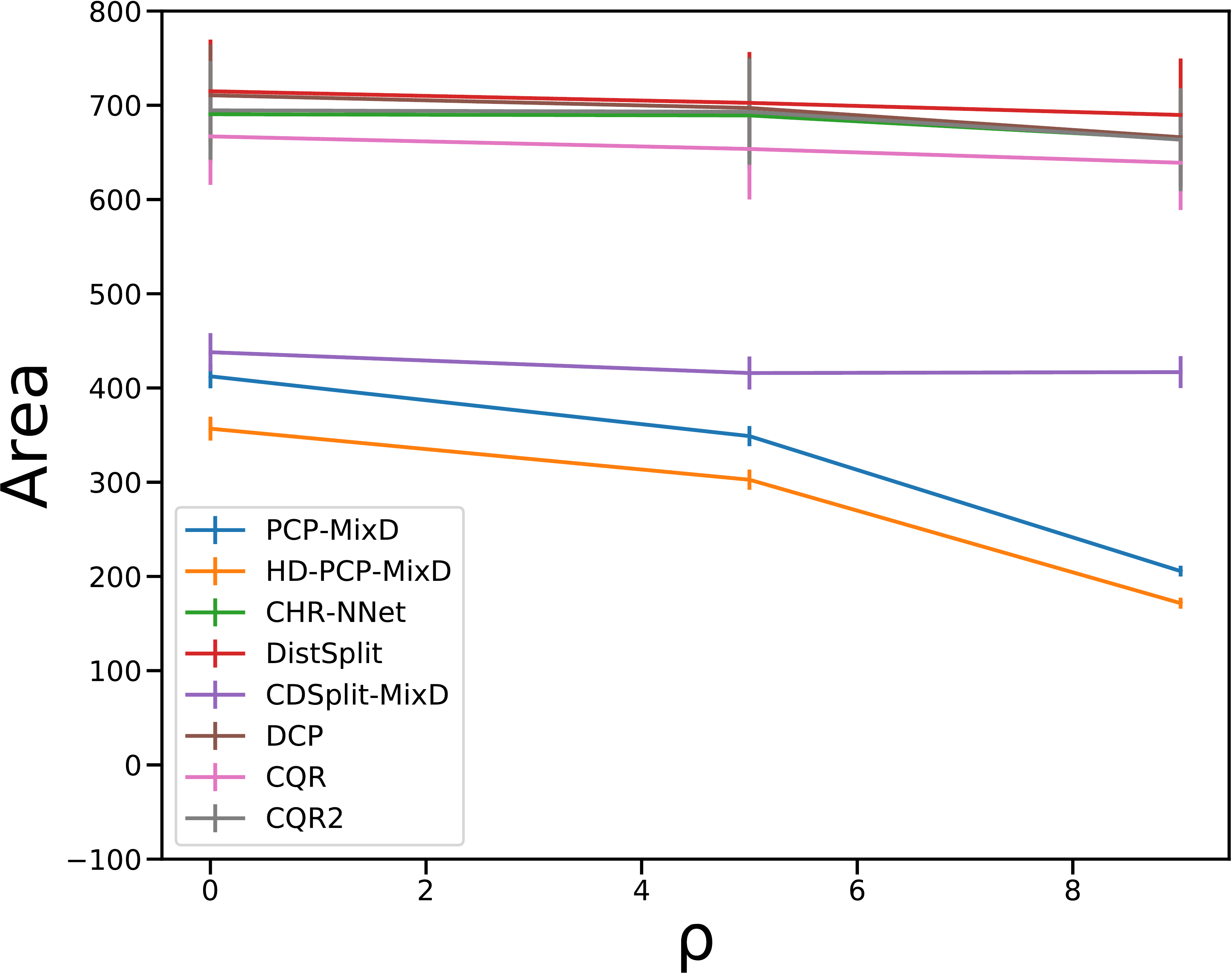}
         \caption{Set Size}
         \label{fig:mdsyndataarea}
     \end{subfigure}   
    \caption{(a): Conditional data distribution $p(Y|X)$ for multi-target synthetic dataset. (b), (c): Marginal coverage and set size for baselines. PCP is able to capture the joint coverage more efficiently than other baselines when $\rho$ increases. %
    }     
 \end{figure}

We further study conformal methods on two multi-target real datasets. %
Taxi Data are the taxi trip records of New York City which include the pickup, drop-off locations of each trip and the corresponding time. We sample 10000 records from January 2016 (7800, 2000, 200 for train, calibration and test respectively), and use the pickup time and location as covariates $X$ to predict drop-off locations $Y$. The locations are represented by latitudes and longitudes.%
For energy dataset, we predict the heating load and cooling load for energy efficiency analysis \citep{tsanas2012accurate}. The predictors are building information such as orientation, glazing area and wall area. We use 568, 100, and 100 samples for train, calibration and test set. We report the predictive set sizes of each algorithm. 
To calculate the predictive set size of PCP, due to the overlapped regions, the set size cannot be calculated exactly. We estimate it by Monte Carlo simulation with a grid size of 100 on each dimension. For (HD-)PCP and CDSplit, we use MixD as the backbone model. ~\looseness=-1

As shown in Table~\ref{tab:md_real_data}, algorithms considering multimodality have significantly better performance compared to other baselines. We visualize the conformal region predicted by (HD-)PCP in \Cref{fig:taxiaddtion} (See \Cref{sec:taxiapp} for more qualitative results). We notice that PCP can successfully capture the most popular regions of New York City for drop-off, downtown of Manhattan, LaGuardia airport and JFK airport, while methods with continuous predictive sets would learn a wide bounding box. Furthermore, as shown in \Cref{tab:md_real_data}, PCP has similar coverage level but with much smaller predictive set compared to CDSplit. This might be because the joint estimation of the high-dimensional targets can capture dependencies between the target elements. Applying the filtering, HD-PCP provides a cleaner and interpretable predictive set and further reduces the set size.  

We include more experiments with two other multi-target real datasets in \Cref{sec:addmultiapp}( 8 and 3 targets) due to space constraint. Since Monte-Carlo estimation of volume of high-dimensional nonconvex set suffers from curse of dimension, we focus on two dimensional task and evaluate the predictive set generated for every pair of all targets. PCP and HD-PCP offer significantly sharper predictive sets compared to baselines on almost all pairs, and HD-PCP can further improve the performance of PCP.

\begin{table}
    \centering
    \resizebox{\textwidth}{!}{
    \begin{tabular}{l|l|c|c|c|c|c|c|c}
         \toprule
         Data & Metric & PCP & HD-PCP & CHR \citep{Sesia2021-ei} & DistSplit \citep{izbicki2020flexible} & CDSplit \citep{izbicki2020flexible} & DCP \citep{Chernozhukov2021DistributionalCP} & CQR \citep{romano2019conformalized} \\
         \midrule
         \multirow{3}{*}{Taxi} 
         & Marg. C & 0.90(0.01) & \graycell 0.89(0.01) & 0.87(0.01) & 0.91(0.01) & 0.91(0.01) & 0.89(0.01) & 0.90(0.01) \\
         & Cond. C & 0.89(0.03) & \graycell 0.87(0.02) & 0.84(0.04) & 0.92(0.03) & 0.89(0.03) & 0.84(0.03) & 0.86(0.04) \\
         & Set Size & 0.0089(0.0001) & \graycell \textbf{0.0064(0.0002)} & 0.0245(0.0009) & 0.0202(0.0009) & 0.0097(0.0023) & 0.0354(0.0013) & 1.9302(0.5202) \\
         \midrule
         \multirow{3}{*}{Energy} 
         & Marg. C & 0.89(0.01) & \graycell 0.90(0.01) & 0.93(0.01) & 0.92(0.01) & 0.93(0.01) & 0.93(0.01) & 0.91(0.01) \\
         & Cond. C & 0.87(0.04) & \graycell 0.94(0.03) & 0.93(0.05) & 0.83(0.10) & 0.89(0.06) & 0.97(0.02) & 0.94(0.03) \\
         & Set Size & 19.22(2.55) & \graycell \textbf{14.13(3.55)} & 27.41(2.65) & 36.31(3.04) & 36.88(3.94) & 34.18(3.61) & 45.20(3.19) \\
         \bottomrule
    \end{tabular}}
    \vspace{1mm}
    \caption{ Summary results of Multi-Target Regression experiments, where Marg. C and Cond. C denote the marginal coverage and approximated conditional coverage. Results are averaged over 10 random cross-validation splits. We report the standard error inside the parentheses. The nominal coverage rate $(1 - \alpha)$ is 90\%, the $K$ for (HD-)PCP is set as 1000.}
    \label{tab:md_real_data}
\end{table}

\section{Conclusion and Future Work} \label{sec:conclusion}
We proposed novel conformal inference algorithms, PCP and HD-PCP, that find valid and sharp predictive  sets  using random  samples from a  conditional generative model; PCP and HD-PCP can be built from either implicit or explicit models.  PCP and HD-PCP outperform existing methods, particularly with multimodal data and multi-dimensional targets.

Like most existing conformal methods, PCP and HD-PCP rely on a reasonable estimation through a fitted conditional generative model. When the generative model is inaccurate, PCP may only provide wide, uninformative predictive sets to ensure the validity. However, given the recent success in conditional generative models, PCP and HD-PCP may be promising in many domains, including precision medicine, quantitative finance and marketing \citep{koshiyama2021generative,lee2018diagnosis,xu2014path}.

Here we focused on regression tasks. Future research can consider adapting PCP and HD-PCP to conformal treatment effect estimation and classification problems~\citep{gao2021enhancing,Romano2020ClassificationWV,Yin2021-fw}. 
\FloatBarrier

\bibliography{arxiv}
\bibliographystyle{plainnat}

\newpage
\appendix

\begin{center}{\Large{\textbf{Appendix}}}\end{center}

\section{Proof} \label{sec:proof}
For the completeness, we first present a lemma adapted from \citet{tib2019conformal,romano2019conformalized}. 
\begin{lemma} \label{lem:qt}
Suppose $Z_1, \cdots, Z_n, Z_{n+1}$ are  scalar random variables that are exchangeable and almost surely distinct, then for $\beta \in [0,1]$
\bas{
 \beta \leq \mathbb{P}(Z_{n+1} \leq Q_{\beta}(Z_{1:n} \cup \{\infty\}))  \leq  \beta + \frac{1}{n+1}. 
}
\end{lemma}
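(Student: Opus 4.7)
The plan is to reduce the statement to the familiar fact that, for an exchangeable and almost surely distinct sample, the rank of any single element is uniform. Concretely, I would first unfold the definition of $Q_\beta$ applied to the augmented set $\{Z_1,\ldots,Z_n,\infty\}$: writing $k := \ceil{\beta(n+1)}$, this quantile is exactly the $k$-th smallest of the augmented list, equal to the $k$-th order statistic $Z_{(k)}$ of $\{Z_1,\ldots,Z_n\}$ when $k \leq n$, and equal to $\infty$ when $k = n+1$.

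Next I would argue that the event $\{Z_{n+1} \leq Q_\beta(Z_{1:n}\cup\{\infty\})\}$ coincides with $\{R \leq k\}$, where $R := |\{i \in \{1,\ldots,n+1\} : Z_i \leq Z_{n+1}\}|$ is the rank of $Z_{n+1}$ in the full sample. This is a short counting argument using almost-sure distinctness: $R \leq k$ says that at most $k-1$ of $Z_1,\ldots,Z_n$ lie strictly below $Z_{n+1}$, which, under a.s.\ distinctness, is the same as $Z_{n+1} \leq Z_{(k)}$. The degenerate case $k = n+1$ is automatic: $Q_\beta = \infty$ and both events hold with probability one.

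With this reformulation in place, exchangeability together with almost-sure distinctness implies that the ordering of $(Z_1,\ldots,Z_{n+1})$ is a uniformly random permutation on $n+1$ letters, so $R$ is uniform on $\{1,\ldots,n+1\}$. Hence
\[
\mathbb{P}\bigl(Z_{n+1} \leq Q_\beta(Z_{1:n}\cup\{\infty\})\bigr) \;=\; \mathbb{P}(R \leq k) \;=\; \frac{\ceil{\beta(n+1)}}{n+1},
\]
and the two claimed inequalities follow immediately from the elementary estimate $\beta(n+1) \leq \ceil{\beta(n+1)} \leq \beta(n+1) + 1$ after dividing through by $n+1$.

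The step requiring the most care is the equivalence between the quantile condition and the rank bound, since it hinges on simultaneously tracking (i) the inflation by $\infty$, (ii) the ceiling implicit in the empirical quantile definition, and (iii) the tie-breaking that almost-sure distinctness rules out. Once that bookkeeping is handled, the rest of the proof is essentially a one-line computation using the uniformity of $R$.
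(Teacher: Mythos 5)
Your proof is correct and follows essentially the same route as the paper: the paper also identifies the inflated quantile $Q_\beta(Z_{1:n}\cup\{\infty\})$ with the corresponding quantile of the full sample $Z_{1:n+1}$ (citing Lemma~1 of Tibshirani et al.) and then concludes from exchangeability that the rank of $Z_{n+1}$ is uniform on $\{1,\dots,n+1\}$. The only difference is that you make the rank-counting and ceiling bookkeeping explicit where the paper defers to the cited lemma, which is a fine, self-contained rendering of the same argument.
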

\begin{proof}[Proof of \Cref{lem:qt}]
Denote the inflated $Z_{1:n}$ as
\ba{
\tilde{Z}_{1:n} =  Z_{1:n} \cup \{\infty\}.
\label{eq:lm0}
}

The quantile 
$$Q_{\beta}(Z_{1:n}) \vcentcolon= \inf_{x} (\sum_{i=1}^n \mathbbm{1}[Z_i \leq x])/n = Z_{(\ceil*{n\beta})}.$$ 

By Lemma 1 of \citet{tib2019conformal}, the events 
\ba{
Z_{n+1} \leq Q_{\beta}(\tilde{Z}_{1:n}) \Leftrightarrow  Z_{n+1} \leq Q_{\beta}(Z_{1:n+1}).  
\label{eq:lm1}
}
Furthermore, by exchangeability and the definition of empirical quantile, 
\ba{
 \beta \leq \mathbb{P}(Z_{n+1} \leq Q_{\beta}(Z_{1:n+1}))  \leq  \beta + \frac{1}{n+1},
 \label{eq:lm2}
}
where the upper bound hold when $Z_{1:n+1}$ are almost surely distinct. By \Cref{eq:lm1}, the proof is completed. 
\end{proof}

\begin{proof}[Proof of \Cref{thm:cov}] Given the estimated conditional density $q(Y|X)$ on an independently sampled training set $D_{\text{tr}}$. By assumption, the calibration set  $\{(X_i,Y_i)\}_{i=1}^n$ and the test point $(X_{n+1}, Y_{n+1})$ are exchangeable. Denote $D_i = (X_i,Y_i, \hat{\Ymat}_i)$ for $i=1,\cdots, n, n+1$. Then  $D_i \sim p(X,Y)q^K(Y | X_i)$ and $\{D_{1:n+1}\}$ are exchangeable because $\hat{\Ymat}_i \indep \hat{\Ymat}_j$. 

The nonconformity score $E_i$  in \Cref{eq:score} is defined as a deterministic function of $D_i $. Therefore $\{E_i\}_{i=1}^{n+1}$ are exchangeable \citep{Kuchibhotla2020-se} and are almost surely distinct. By \Cref{lem:qt}, 
\ba{
1- \alpha \leq \mathbb{P}(E_{n+1} \leq Q_{\alpha}(E_{1:n} \cup \{\infty\}) \g D_{\text{tr}})  \leq  1- \alpha + \frac{1}{n+1}. 
\label{eq:thm1}
}
Next we demonstrate that for $$\hat{C}(X_{n+1}, \hat{\Ymat}_{n+1}) =  \cup_{k=1}^K \big\{y: \norm{y - \hat{Y}_{n+1,k}}\leq \hat{Q}\big\}, ~\hat{Q} = Q_{1-\alpha}(E_{1:n}\cup \{\infty\}), $$
the following statement holds
\bas{
Y_{n+1} \in \hat{C}(X_{n+1}, \hat{\Ymat}_{n+1})   \Leftrightarrow E_{n+1} \leq \hat{Q}.
} 
Suppose the LHS is true, then $\exists$ $m, 1 \leq m \leq K$,  s.t. $Y_{n+1} \in \big\{y: \norm{y - \hat{Y}_{n+1,m}}\leq \hat{Q}\big\}$. This means $\norm{Y_{n+1} - \hat{Y}_{n+1,m}} \leq \hat{Q}$. Hence $E_{n+1} = \min_k \norm{Y_{n+1} - \hat{Y}_{n+1,k}} \leq  \norm{Y_{n+1} - \hat{Y}_{n+1,m}} \leq  \hat{Q}$.

On the other hand, suppose the RHS is true, letting $t = \argmin_k \norm{Y_{n+1} - \hat{Y}_{n+1,k}}$, we have $\norm{Y_{n+1} - \hat{Y}_{n+1,t}} \leq \hat{Q}$, i.e., $Y_{n+1} \in \big\{y: \norm{y - \hat{Y}_{n+1,t}}\leq \hat{Q}\big\}$. Therefore, $Y_{n+1} \in \hat{C}(X_{n+1}, \hat{\Ymat}_{n+1})$.

Then by \Cref{eq:thm1}, we have 
\ba{
1- \alpha \leq \mathbb{P}(Y_{n+1} \in \hat{C}(X_{n+1}, \hat{\Ymat}_{n+1})   \g D_{\text{tr}})  \leq  1- \alpha + \frac{1}{n+1}. 
}
Marginalizing out $D_{\text{tr}}$  the statement is proved. 
\end{proof}

\begin{proof}[Proof of \Cref{cor:cov}] 
Suppose $0 \leq \beta \leq n/(n+1)$, then $\ceil*{(n+1)\beta} \neq n+1$.  We have 
\ba{
Q_{\beta}(\tilde{Z}_{1:n}) = \tilde{Z}_{(\ceil*{(n+1)\beta}, n+1)} = Z_{(\ceil*{n \frac{n+1}{n} \beta, n})} = Q_{(1+\frac1n)\beta}(Z_{1:n}) 
}
where $\tilde{Z}_{1:n}$ is defined in \Cref{eq:lm0}. By \Cref{eq:lm1,eq:lm2}, 
\ba{
 \beta \leq \mathbb{P}(Z_{n+1} \leq Q_{(1+\frac1n)\beta}(Z_{1:n}))  \leq  \beta + \frac{1}{n+1}. 
 \label{eq:cor1}
}
By \Cref{eq:cor1}, we have
\ba{
 \beta - \frac{1}{n+1} \leq \mathbb{P}(Z_{n+1} \leq Q_{\beta}(Z_{1:n}))  \leq  \beta + \frac{1}{n+1}. 
}
\end{proof}

\begin{proof}[Proof of \Cref{cor:hdpcp}]
With the notations in the proof of \Cref{thm:cov}, $D_i = (X_i,Y_i, \hat{\Ymat}_i)$ are i.i.d. varariables. For a fixed conditional density function $q(y|x)$, the nonconformity score $E_i$ is fully determined by $X_i, Y_i, \hat{\Ymat}_i$. So $E_i = g(D_i)$ where $g(\cdot)$ is a deterministic function including the filtering step of HD-PCP. By \citet{Kuchibhotla2020-se}, since $\{D_i\}_{i=1}^{n+1}$ are i.i.d., $\{E_i\}_{i=1}^{n+1}$ are exchangeable. The other parts of proof follow the same as the proof of \Cref{thm:cov}.
\end{proof}

\section{High Density Probabilistic Conformal Prediction}
\label{sec:hdpcpfigure}
Figure~\ref{fig:hdpcp} shows the different predictive sets with $95\%$ coverage when the  underlying distribution is bi-mode normal. 

We summarize our HD-PCP algorithm in \Cref{alg:hd_pcp}. 

\begin{figure}
    \centering
    \includegraphics[width=0.5\textwidth]{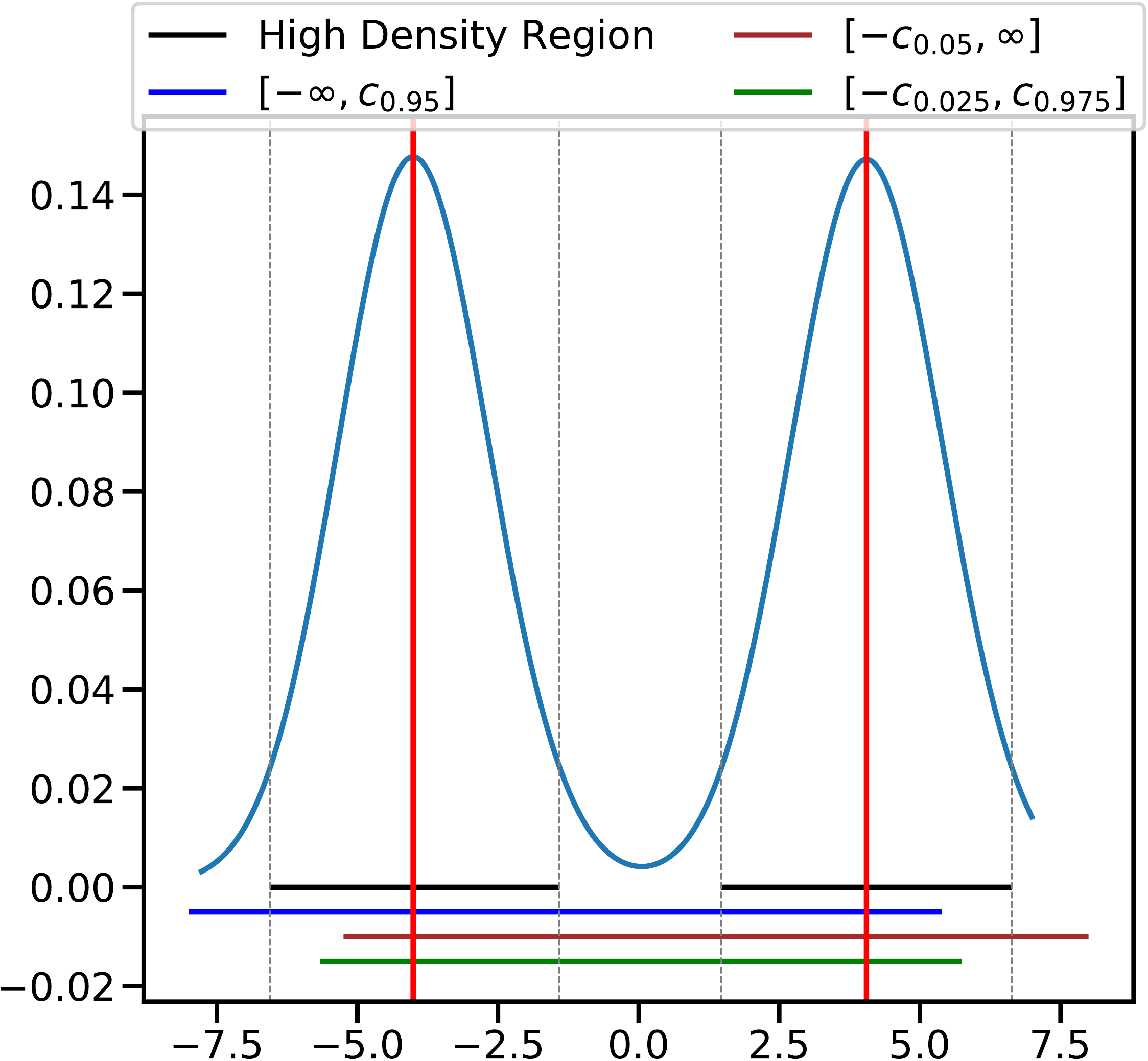}
    \caption{High Density Region can capture multi-mode easier comparing to other predictive set.}
    \label{fig:hdpcp}
\end{figure}

\begin{algorithm}[!t]
  \caption{High Density Probabilistic Conformal Prediction} 
  \label{alg:hd_pcp}
\textbf{Input: } Data $\Dmat = \{(X_{i}, Y_{i})\}_{i=1}^{N}$, nominal level $\alpha$, test point $X$, generative model class $\mathcal{Q}$, sample size $K$, $\beta$ grid $B$ (For HD-PCP). 

  \vspace{0.2cm}
  \textbf{Step I: Conditional generative model}
  \begin{algorithmic}[1]
    \STATE Split the data into three folds $\Z_{\text{tr}}$, $\Z_{\text{val}}$, $\Z_{\text{cal}}$ with set of index as $\bI_{\text{tr}}$, $\bI_{\text{val}}$, $\bI_{\text{cal}}$ respectively
    \STATE Estimate  $q(Y|X)$ on $\Z_{\text{tr}}$  with hyper-parameter chosen by cross validation on $\Z_{\text{val}}$
  \end{algorithmic}
  \vspace{0.2cm}
  \textbf{Step II: Predictive set for a test point}
    \begin{algorithmic}[1]
    \STATE For $i \in \bI_{\text{cal}} $, sample $\hat{Y}_{i1}, \cdots, \hat{Y}_{iK} \sim q(Y|X_i)$ 
    \STATE For $\beta \in B$, Filtering out $\beta$ fraction of $\{\hat{Y}_{ik}\}_{k=1}^K$ with the lowest density. Repeat Line 3-7 for $x\in \bI_{\text{cal}}$. $\beta_0=\argmin_{\beta} \lambda(\sum_{x\in\bI_{\text{cal}}}\hat{C}_{\beta}(x, \hat{\Ymat}))$
    
    \STATE For a test point, sample $\hat{Y}_{1}, \cdots, \hat{Y}_{K} \sim q(Y|X)$
    
    \STATE Filtering out $\beta$ fraction of $\{\hat{Y}_{k}\}_{k=1}^K$ with the lowest density  
    
     \STATE Compute nonconformity score $ \{E_i\}_{i \in {\bI}_{\text{cal}} }$ by \Cref{eq:score}, $E_{N+1}=\infty$, $\tilde{\bI}_{\text{cal}} = \bI_{\text{cal}} \cup \{N+1\}$
     
     \STATE Set $r$ as the $(1-\alpha)$ empirical quantile of $ \{E_i\}_{i \in \tilde{\bI}_{\text{cal}} }$
 
 \STATE Compute the predictive set $\hat{C}_{\beta}(X, \hat{\Ymat})$ by \Cref{eq:interval} 
  \end{algorithmic}
  \textbf{Output:} Predictive set $\hat{C}_{\beta_0}(X, \hat{\Ymat})$ 
\end{algorithm}

\section{Hyperparameters}

PCP introduces one additional hyperparameter, the sample size $K$. We conduct ablation study on the effect of $K$ in \Cref{fig:ablation_K} and find that as long as $K$ is set moderately large, i.e., $K=40$, the predictive set size is near optimal. $K$ is not a very sensitive hyperparameter that needs much effort for tuning. 

\section{Summary of Conditional generative models} \label{sec:appendix_cde}

\textbf{SIVI Model.} Following \citep{wang2020thompson}, we build a conditional distribution estimator by using semi-implicit variational inference \citep{Yin2018SemiImplicitVI} and we call it SIVI model. Specifically, we approximate $\mathbb{P}(Y \g X)$ by an inference distribution $q_{\phiv}(Y\g X)$ with respect to parameter $\phiv$. We construct the inference distribution as a hierarchical model,
$$
y \sim q_{\phiv_1}(y | x, \zv), \zv \sim q_{\phiv_2}(\zv | x, \psiv), \psiv \sim p(\psiv)
$$
Here $\zv \in \bR^d$ is an auxiliary latent variable and $p(\psiv)$ is a known noise distribution, i.e., $\cN (\mathbf{0}, \Imat)$. The inference distribution is the marginal distribution of the
hierarchical model, $q_{\phiv}(y \g x) = \int q_{\phiv_1}(y | \zv)q_{\phiv_2}(\zv | x)d\zv$ with $\phiv = (\phiv_1, \phiv_2)$. We model $q_{\phiv_1}(y | \zv)$ and $q_{\phiv_2}(\zv | x)$ as Gaussian distributions, whose mean and standard deviation are the outputs of neural networks feed with corresponding $(x, \zv)$ and $(x, \psiv)$. The marginal distribution $q(y \g x)$ can thus be constructed with flexibility in modeling multimodality, skewness and kurtosis. We learn the $\phiv$ by maximizing the ELBO \citep{wang2020thompson}, 
$$
\cL_{K} = \mathbb{E}_{\psiv^{(0)}, \dots, \psiv^{(K)} \stackrel{iid} \sim p(\psiv)}
\mathbb{E}_{\zv \sim q(\cdotv \g \psiv^{(0)},\xv)}
\bigg[ \ln q_{\phiv_1}(y \g \xv, \zv) + \ln \frac{p(\zv)}{\frac{1}{K+1}\sum_{k=0}^K q_{\phiv_2}(\zv \g \psiv^{(k)},\xv)} \bigg],
$$
where $p(\zv)$ is a prior distribution for latent variable $\zv$, i.e.,$\cN (\mathbf{0}, \Imat)$ and $K$ is set as 20. 

\textbf{GAN model.} To fit a conditional distribution, we follows \citep{mirza2014conditional} to build a Conditional GAN model with a generator $G(\xv, \zv) $ and a discriminator $D(\xv, y)$. For simplicity, we call it GAN model. Here, the $\zv$ is a latent variable, which is usually set as $\zv \sim \cN (\mathbf{0}, \Imat)$, $G(\xv, \zv)$ is modeled by a neural network, whose outputs are the samples of $y$, and $D(\xv, y)$ is another neural network, which outputs the probability that the given $y$ is from the true data distribution. We train $G$ and $D$ with the following adversarial loss,
$$
\min_{G} \max_{D} V(G, D) = \mathbb{E}_{\xv, y \sim p_{data}(\xv, y)}[\log D(\xv, y)] + \mathbb{E}_{\zv \sim p(\zv), \xv \sim p_{data}(\xv)}[\log (1 - D(\xv, G(\xv, \zv)))]
$$

\textbf{Kernel Mixture Network.} \citet{Ambrogioni2017} model arbitrarily complex conditional densities as linear combinations of a family of kernel functions centered at a subset of training points. The weights are determined by the outer layer of a deep neural network, trained by minimizing the negative log likelihood. The conditional density function is modeled as follows,
$$
q(y |x) = \frac{1}{\sum_{p, j} w_{p, j} (x; W)} \sum_{p, j} w_{p, j} (x; W) \mathcal{K}_j(y, y^{(p)}), 
$$
where $p$ denotes the index of the observed data points, $\mathcal{K}_j$ is the pre-set kernel function, $j$ is the index of selected bandwidth for $\mathcal{K}_j$, and $w_{p, j} (x; W)$ represents the weight of each kernel. A common choice for $\mathcal{K}_j$ is the Gaussian kernel,
$$
\mathcal{K}_j(y, y'; \sigma_j) = \frac{1}{\sqrt{2 \pi} \sigma_j} \exp^{-\frac{(y - y')^2}{2 \sigma_j^2}}.
$$
The weights $w_{p, j} (x; W)$ are determined by a deep neural network (DNN), with covariates $x$ as the inputs and $W$ as the parameters. All weights are non negative by applying non-negative activation functions on the output layer of DNN. We train the KMN model by minimizing the loss function,
$$
\cL(W) = -\sum_q \bigg [ \log \sum_{p, j} w_{p, j} (x; W) \mathcal{K}_j(y, y^{(p)}) - \log \sum_{p, j} w_{p, j} (x; W) \bigg]
$$

\textbf{Mixture Density Network.} \citet{bishop1994mixture} proposes the mixture density network as fellows,
$$
q(y |x) = \sum_{k=1}^K \pi_k(x) \cN (y | \mu_k(x), \sigma_k^2(x))
$$
where $\pi_k(\cdot)$, $\mu_k(\cdot)$ and $\sigma_k(\cdot)$ are all modeled by neural networks. $\sum_{k=1}^K \pi_k(x) = 1$ is guaranteed by using softmax activation function. The model is trained by minimizing the loss function,
$$
\cL = - \sum_{i=1}^N \log \bigg [ \sum_{k=1}^K \pi_k(x_i) \cN (y_i | \mu_k(x_i), \sigma_k^2(x_i)) \bigg],
$$
where $\{(x_i, y_i)\}_{i=1}^N$ are the observed data points. 

\textbf{Quantile Regression Forest} \citet{Meinshausen2006QuantileRF} shows that random forests provide information about the full conditional distribution of the response variable, not only about the conditional mean. Conditional quantiles can be inferred with quantile regression forests, a generalisation of random forests. Quantile regression forests give a non-parametric and accurate way of estimating conditional quantiles for high-dimensional predictor variables. We refer to \citep{Meinshausen2006QuantileRF} for more details about QRF model. PCP needs to get samples from QRF. We first sample a percentile $\tau \sim U[0, 1]$, the uniform distribution on the unit interval, and then use QRF to get the estimated conditional quantile value $y_{\tau}$ as a $y$ sample.

\section{Full synthetic experiment results} \label{appendix:synthetic}

We include the Full synthetic experiment results for 2D toy datasets, s-curve, half-moons, 25-Gaussians, 8-Gaussians, circle and swiss-roll, in \Cref{fig:synthetic_all_fr_0.0} and \Cref{fig:synthetic_all_fr_0.2}. We compare conformal prediction with mean estimation (CP-MeanPred), CHR-QRF and CDSplit-MixD with our method (HD-)PCP. 

\begin{figure}
    \centering
    \includegraphics[width=\textwidth]{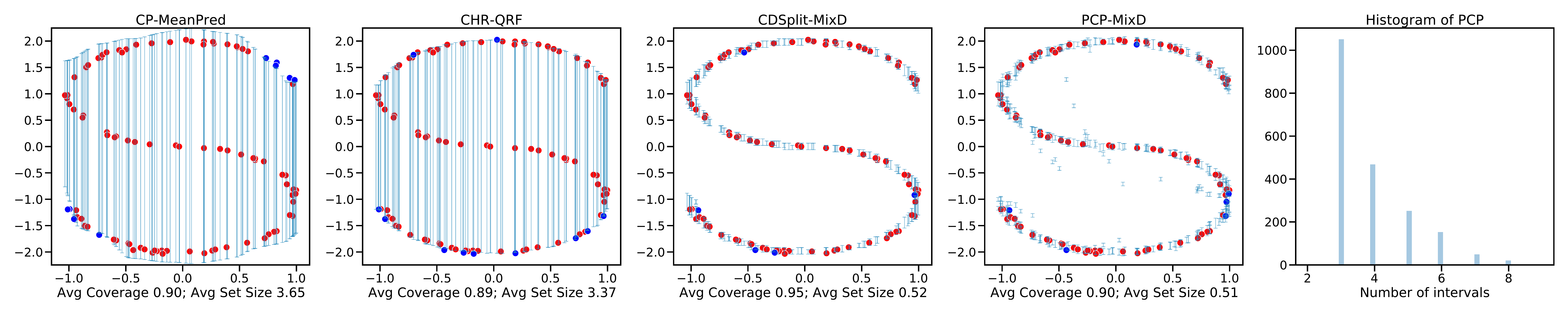} \\
    \includegraphics[width=\textwidth]{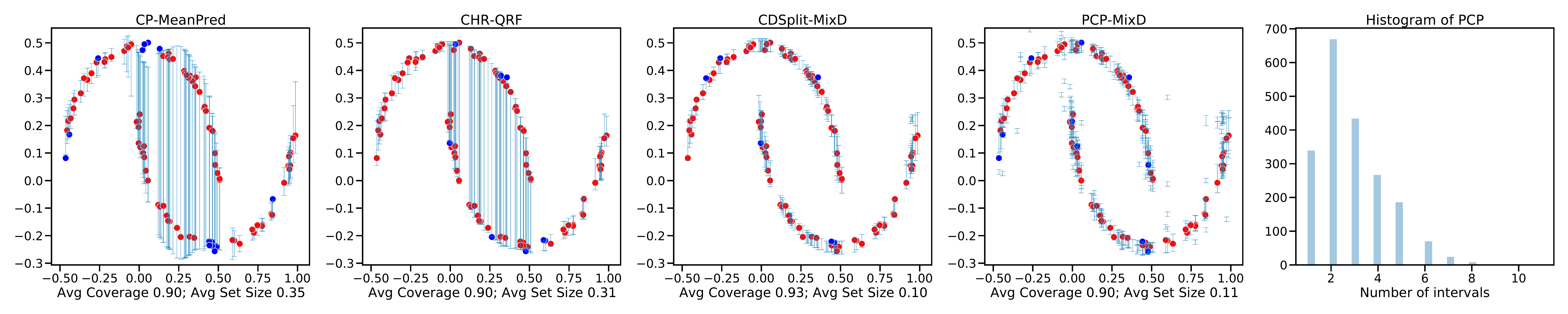} \\
    \includegraphics[width=\textwidth]{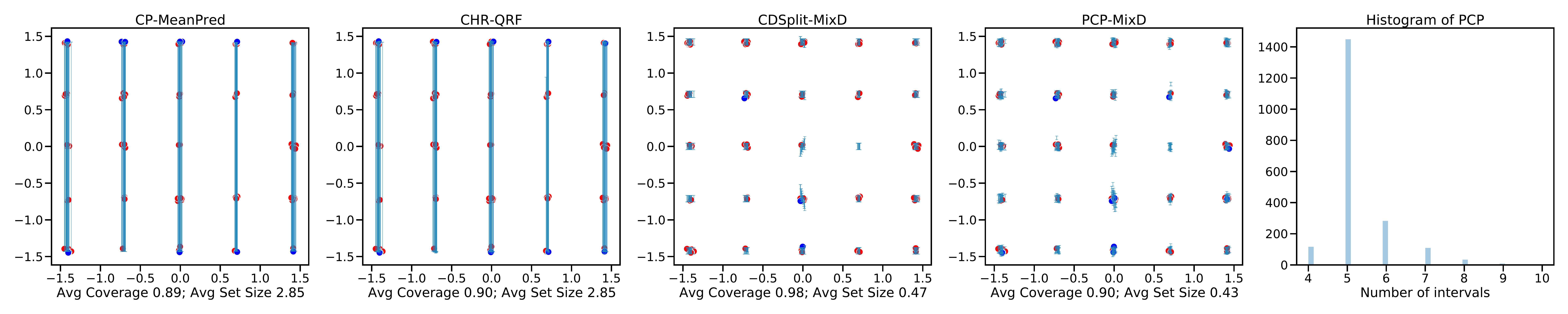} \\
    \includegraphics[width=\textwidth]{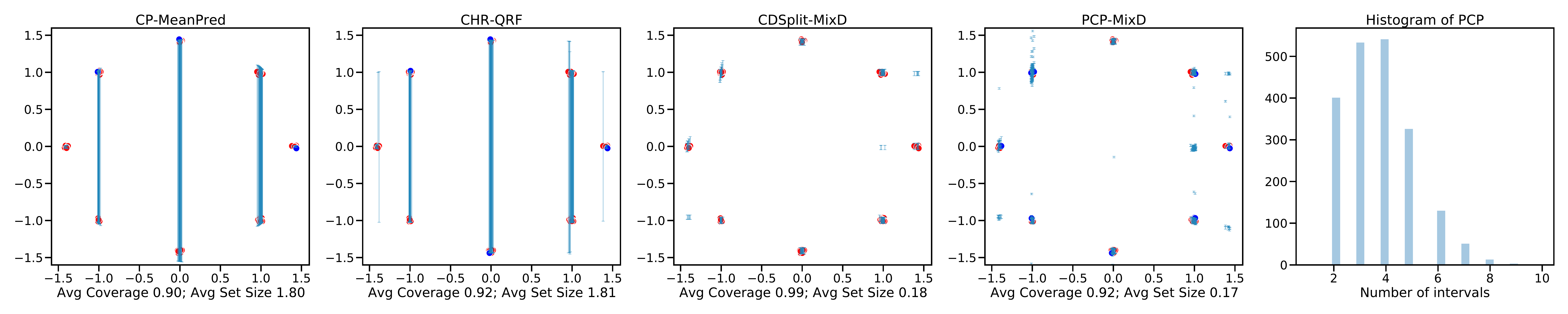} \\
    \includegraphics[width=\textwidth]{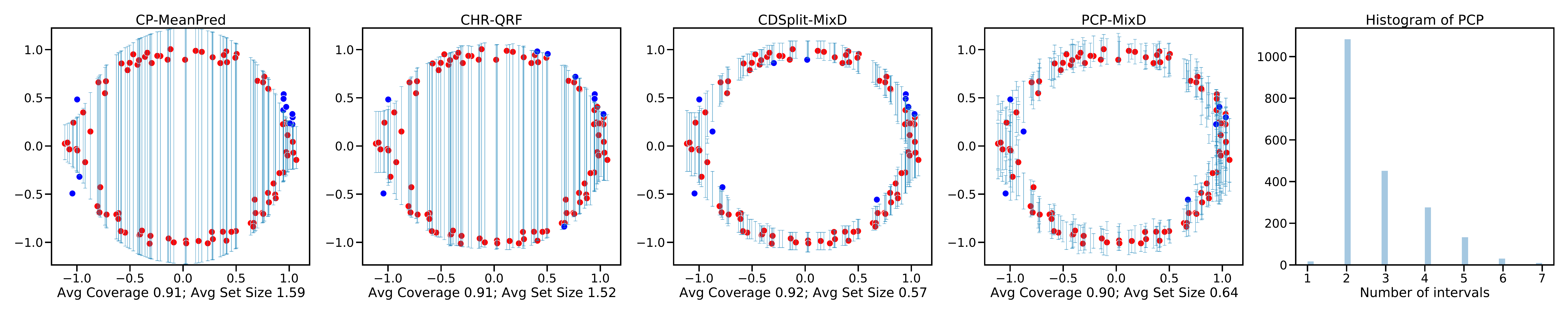} \\
    \includegraphics[width=\textwidth]{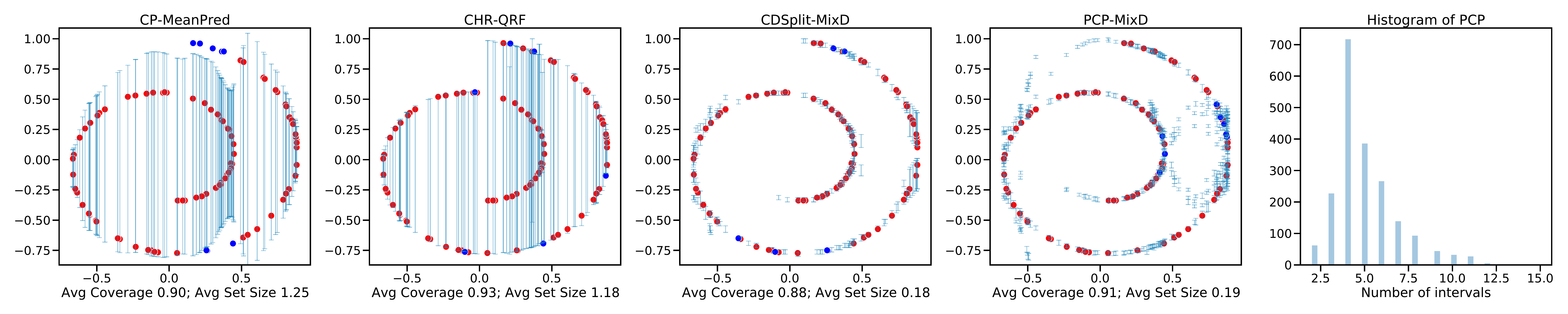} \\
    \caption{Visualization of predictive sets ($\alpha=0.1$) on 2D toy datasets: s-curve, half-moons, 25-Gaussians, 8-Gaussians, circle and swiss-roll. For ours, we show the PCP in the last two columns. We show the predictive sets on 100 test data samples, where blues lines represent the predictive sets, blue dots are test points that are not covered by the predictive sets and reds dots are the test points covered. We show the marginal coverage and the average interval length across test datapoints in the x-axis label. The fifth column shows the histogram of the number predicted intervals of PCP. }
    \label{fig:synthetic_all_fr_0.0}
\end{figure}

\begin{figure}
    \centering
    \includegraphics[width=\textwidth]{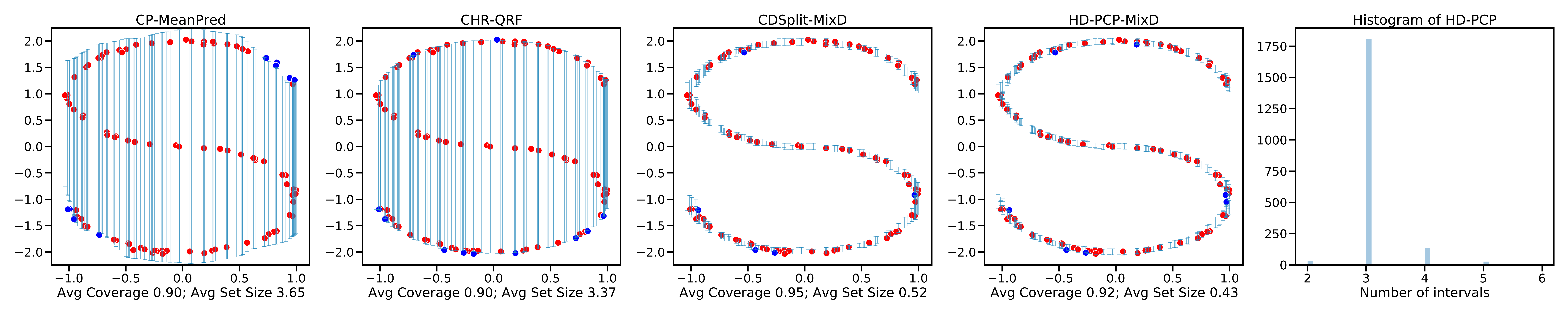} \\
    \includegraphics[width=\textwidth]{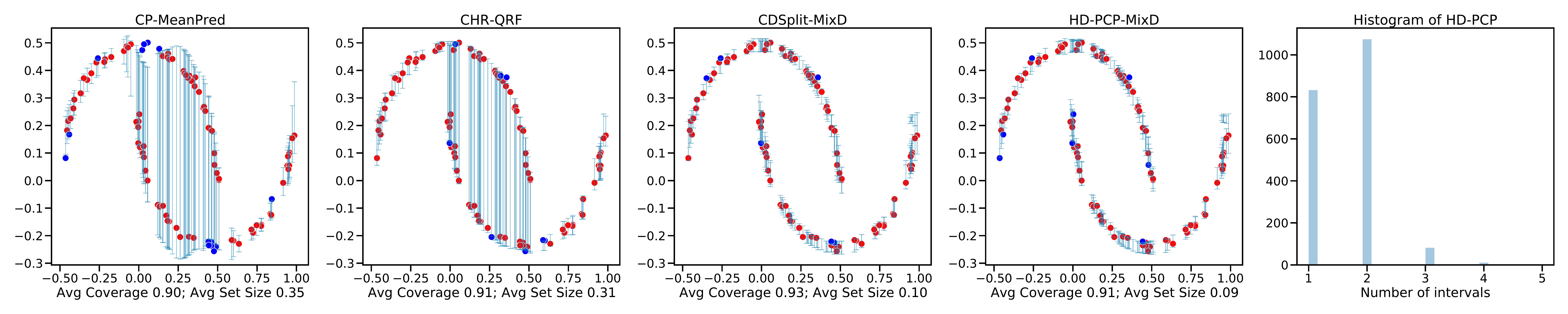} \\
    \includegraphics[width=\textwidth]{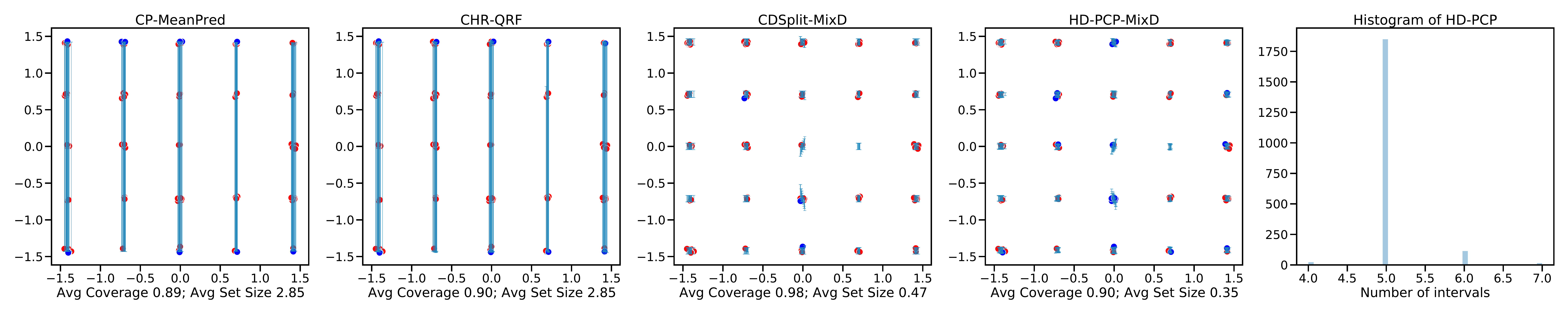} \\
    \includegraphics[width=\textwidth]{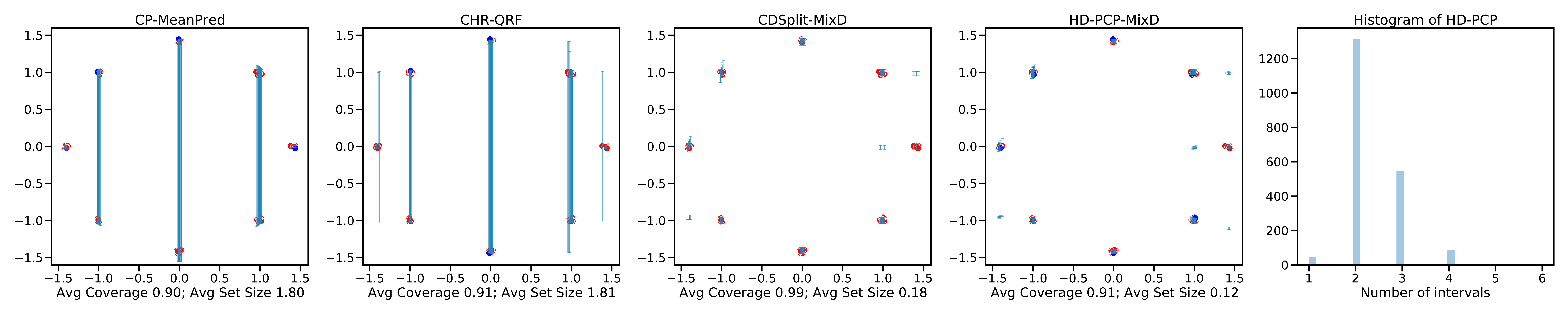} \\
    \includegraphics[width=\textwidth]{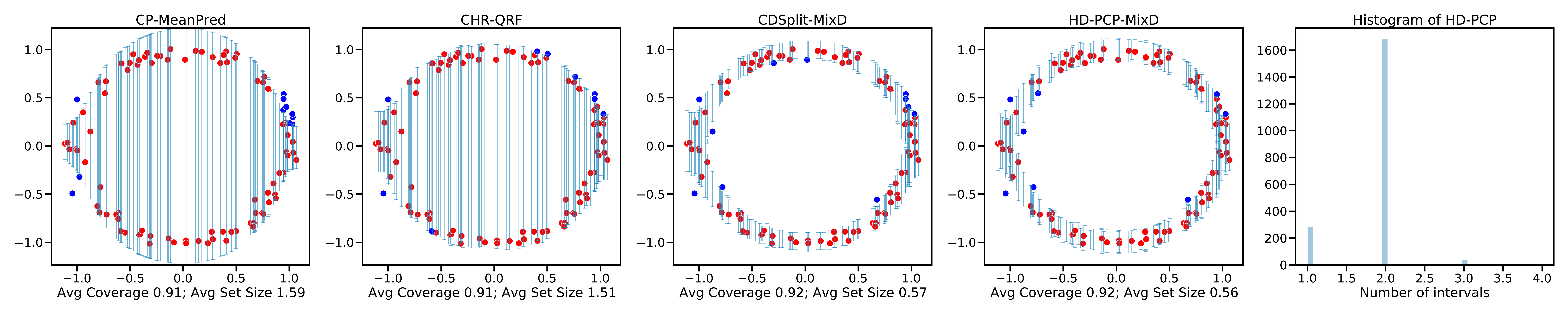} \\
    \includegraphics[width=\textwidth]{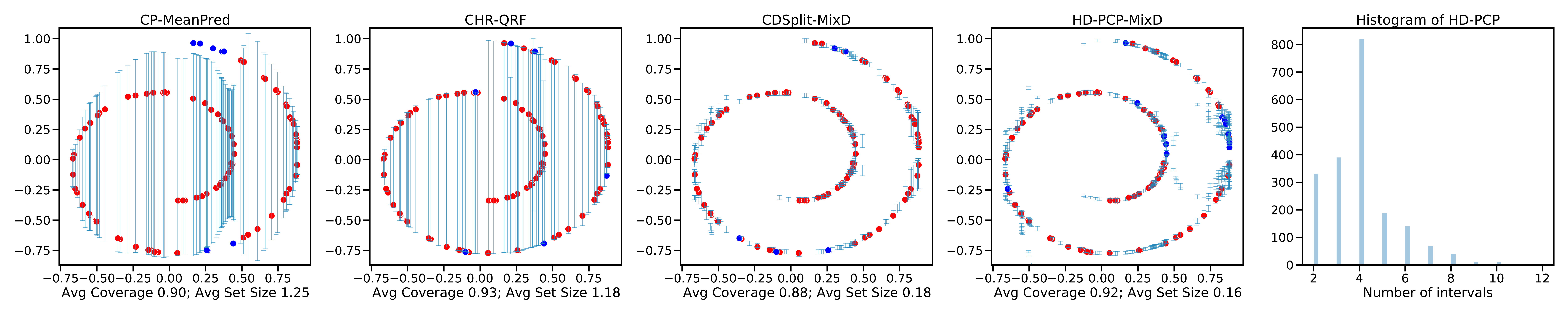} \\
    \caption{Visualization of predictive sets ($\alpha=0.1, \beta=0.2$) on 2D toy datasets: s-curve, half-moons, 25-Gaussians, 8-Gaussians, circle and swiss-roll. For ours, we show the HD-PCP in the last two columns. We show the predictive sets on 100 test data samples, where blues lines represent the predictive sets, blue dots are test points that are not covered by the predictive sets and reds dots are the test points covered. We show the marginal coverage and the average interval length across test datapoints in the x-axis label. The fifth column shows the histogram of the number predicted intervals of HD-PCP. }
    \label{fig:synthetic_all_fr_0.2}
\end{figure}

The data used to plot~\Cref{fig:mdsyndatacover} and~\Cref{fig:mdsyndataarea} is included in~\Cref{tab:md_syn_data}. When $\rho$ increases, the set size decreases for PCP and HD-PCP while keeping nearly constant for other baselines, which overlooks the joint relationship between targets. 

\section{Full real data experiment results} \label{sec:full_real_exp}

We report all experiment results for our single target real data regression tasks in \Cref{tab:real_data_1}, \Cref{tab:real_data_2} and \Cref{tab:real_data_3}. \Cref{tab:real_summary} illustrates the best performance of each method with best backbone model picked for each dataset repsectively. 

\begin{figure}
    \centering
    \includegraphics[width=\textwidth]{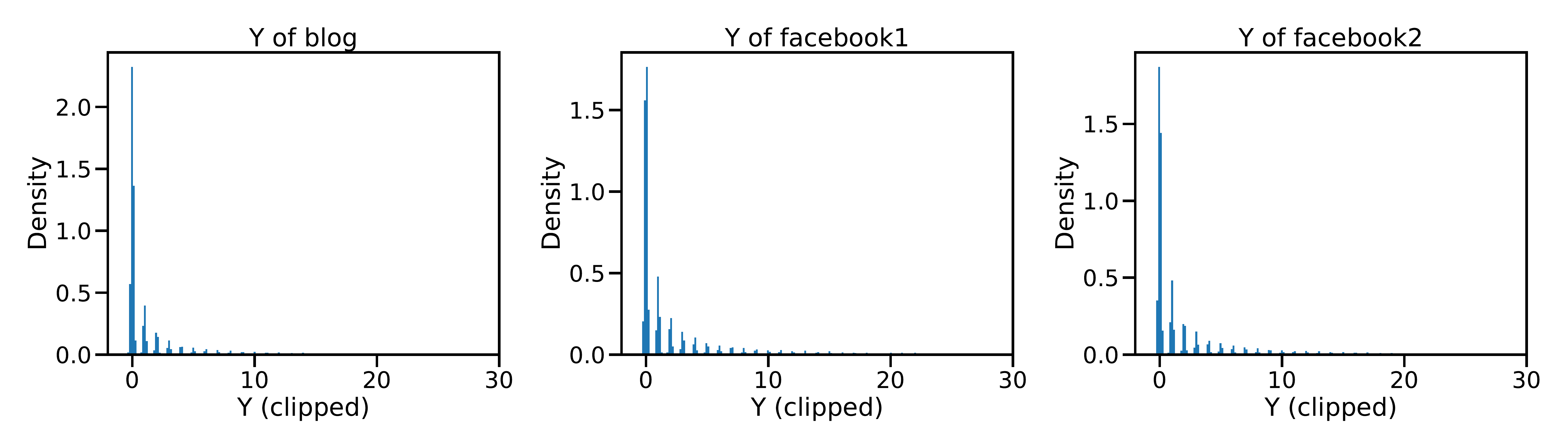} \\
    \includegraphics[width=\textwidth]{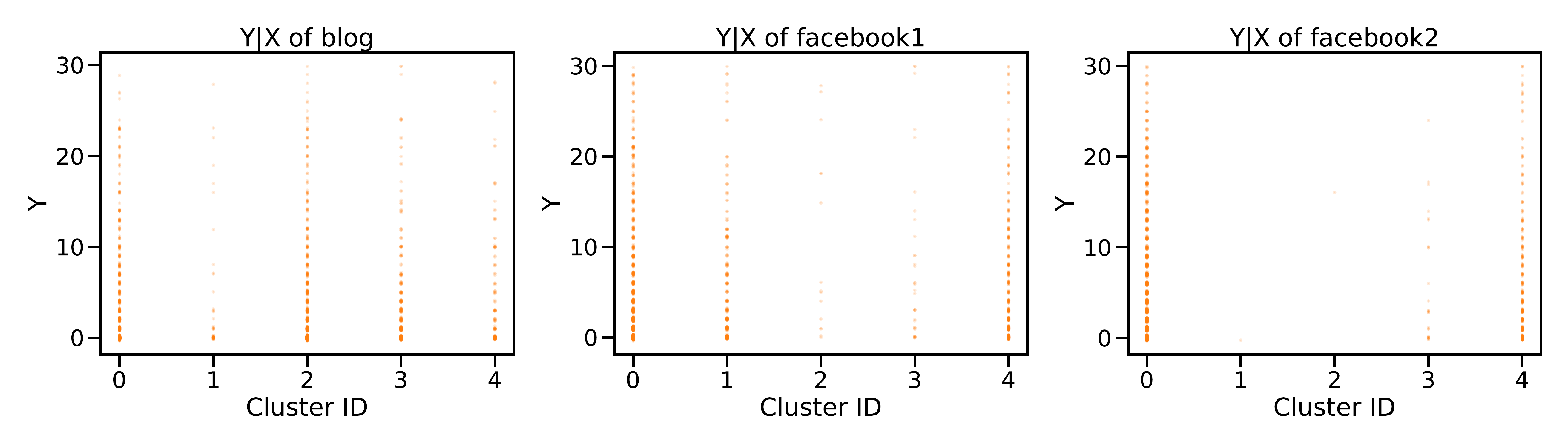}
    \caption{$Y$ response variable of blog, facebook1, and facebook2 data. The first row shows the $Y$ marginal histogram plots. The second row shows an approximation of $Y | X$ distribution, where we fit a K-Means on $X$ to generate the 5 clusters. We could find that multimodality exhibits clearly in the $Y|X$ distribution.}
    \label{fig:y_exploration}
\end{figure}

\begin{table}[]
    \centering
    \begin{tabular}{lccc}
        \toprule
        Data & n\_train & n\_calib & n\_test \\
        \midrule
        bike & 6886 & 2000 & 2000 \\
        bio & 41730 & 2000 & 2000 \\
        blog & 48397 & 2000 & 2000 \\
        facebook1 & 36948 & 2000 & 2000 \\
        facebook2 & 77311 & 2000 & 2000 \\
        meps\_19 & 11785 & 2000 & 2000 \\
        meps\_20 & 13541 & 2000 & 2000 \\
        meps\_21 & 11656 & 2000 & 2000 \\
        temperature & 5314 & 1138 & 1138 \\
        \bottomrule
    \end{tabular}
    \caption{Dataset splits for training, calibration and testing. }
    \label{tab:dataset_splits}
\end{table}

\begin{table}
    \centering
    \resizebox{0.85\textwidth}{!}{
    \begin{tabular}{llcccc}
         \toprule
         Data & Method & Marg. Coverage & Cond. Coverage &  Set Size Mean & Set Size SE \\
         \midrule
         bike & {PCP-SIVI}    & {0.90} & {0.86} & {134.55} & {1.13} \\
         \rowcolor{hl_color}
         \whitecell & \textbf{HD-PCP-MixD($\mathbf{\betav=0.2}$)} & \textbf{0.90} & \textbf{0.88} & \textbf{102.92} & \textbf{0.48} \\
         & CHR-QRF      & 0.90 & 0.88 & 204.10 & 1.03 \\
         & DistSplit   & 0.90  & 0.87 & 423.13 & 1.51 \\
         & CDSplit-MixD & 0.92 & 0.92 & 115.74 & 0.50 \\
         & DCP         & 0.90  & 0.88 & 443.76 & 1.36 \\
         & CQR         & 0.90  & 0.89 & 403.88 & 0.86 \\
         \midrule
         bio & PCP-KMN     & 0.90 & 0.89 & 10.30 & 0.05 \\
         \rowcolor{hl_color} \whitecell 
         & \textbf{HD-PCP-KMN($\mathbf{\betav=0.2}$)}  & \textbf{0.90} & \textbf{0.89} & \textbf{8.76}  & \textbf{0.05} \\
         & CHR-QRF      & 0.90 & 0.89 & 10.21 & 0.04 \\
         & DistSplit   & 0.90 & 0.89 & 13.19 & 0.04 \\
         & {CDSplit-KMN} & {0.9} & {0.9}  & {9.13}  & {0.04} \\
         & DCP         & 0.90 & 0.89 & 12.95 & 0.04 \\
         & CQR2        & 0.90 & 0.89 & 12.88 & 0.05 \\
         \midrule
         \rowcolor{hl_color} \whitecell 
         blogdata & \textbf{PCP-QRF}     & \textbf{0.89} & \textbf{0.85} & \textbf{3.68}    & \textbf{0.65} \\
         & HD-PCP-MixD($\beta=0.2$) & 0.90 & 0.87 & 9.44  & 0.19 \\
         & CHR-QRF      & 0.90  & 0.87 & 10.81   & 0.17  \\
         & DistSplit   & 0.90  & 0.87 & 16.27   & 0.23  \\
         & CDSplit-MixD & 0.96 & 0.95 & 39.00 & 0.40 \\
         & DCP         & 0.90  & 0.88 & 1422.36 & 0.03  \\
         & CQR2        & 0.90  & 0.87 & 13.91   & 0.27 \\
         \midrule
         \rowcolor{hl_color} \whitecell 
         facebook1
         & \textbf{PCP-QRF}     & \textbf{0.90}  & \textbf{0.82} & \textbf{4.52}    & \textbf{0.76} \\
         & HD-PCP-KMN($\beta=0.2$)  & 0.90 & 0.83 & 8.62 & 0.06 \\
         & CHR-NNet    & 0.90  & 0.87 & 9.96    & 0.11 \\
         & DistSplit   & 0.90  & 0.89 & 14.03   & 0.16 \\
         & CDSplit-MixD & 0.95 & 0.95 & 33.69 & 0.16 \\
         & DCP         & 0.90  & 0.89 & 1303.01 & 0.04 \\
         & CQR2        & 0.90  & 0.88 & 12.17   & 0.15 \\
        \midrule
         \rowcolor{hl_color} \whitecell 
         facebook2 
         & \textbf{PCP-QRF}     & \textbf{0.90}  & \textbf{0.82} & \textbf{3.62}    & \textbf{0.72} \\
         & HD-PCP-KMN($\beta=0.2$)  & 0.90 & 0.82 & 8.34 & 0.07 \\
         & CHR-NNet    & 0.90  & 0.87 & 10.15   & 0.13 \\
         & DistSplit   & 0.90  & 0.89 & 13.48   & 0.19 \\
         & CDSplit-KMN & 0.95 & 0.94 & 44.53   & 0.26 \\
         & DCP         & 0.90  & 0.89 & 1963.68 & 0.03 \\
         & CQR2        & 0.90  & 0.89 & 11.41   & 0.17 \\
         \midrule
         meps19 & {PCP-GAN} & {0.90}  & {0.86} & {18.41}   & {0.17} \\
         \rowcolor{hl_color} \whitecell 
         & \textbf{HD-PCP-MixD($\mathbf{\betav=0.2}$)} & \textbf{0.90} & \textbf{0.88} & \textbf{17.78} & \textbf{0.18} \\
         & {CHR-QRF}     & {0.90}  & {0.89} & {18.26}  & {0.15} \\
         & DistSplit   & 0.90  & 0.89 & 29.96  & 0.28 \\
         & CDSplit-MixD & 0.93 & 0.92 & 23.86 & 0.17 \\
         & DCP         & 0.90  & 0.88 & 559.23 & 0.01 \\
         & CQR         & 0.90  & 0.89 & 28.71  & 0.18 \\
        \midrule
        meps20 & PCP-MixD    & 0.90 & 0.87 & 19.52 & 0.16 \\
        \rowcolor{hl_color} \whitecell 
         & \textbf{HD-PCP-MixD($\mathbf{\betav=0.2}$)} & \textbf{0.90} & \textbf{0.88} & \textbf{18.19} & \textbf{0.17} \\
         \rowcolor{hl_color} \whitecell
         & \textbf{CHR-QRF}      & \textbf{0.90}  & \textbf{0.90}  & \textbf{17.94}  & \textbf{0.18} \\
         & DistSplit   & 0.90  & 0.90  & 29.35  & 0.23 \\
         & CDSplit-MixD & 0.92 & 0.92 & 22.93 & 0.16 \\
         & DCP         & 0.90  & 0.88 & 520.25 & 0.01 \\
         & CQR         & 0.90  & 0.90  & 27.57  & 0.15 \\
         \midrule
         meps21 & PCP-MixD  & 0.90 & 0.87 & 19.18 & 0.12 \\
         \rowcolor{hl_color} \whitecell 
         & \textbf{HD-PCP-MixD($\mathbf{\betav=0.2}$)} & \textbf{0.90} & \textbf{0.88} & \textbf{17.91} & \textbf{0.15} \\
         & {CHR-QRF}      & {0.90}  & {0.90}  & {18.65}  & {0.16} \\
         & DistSplit   & 0.90  & 0.89 & 30.32  & 0.31 \\
         & CDSplit-MixD & 0.93 & 0.92 & 23.63 & 0.17 \\
         & DCP         & 0.90  & 0.88 & 531.25 & 0.01 \\
         & CQR         & 0.90  & 0.89 & 29.89  & 0.2  \\
         \midrule
         temperature & PCP-MixD    & 0.90 & 0.90 & 2.10 & 0.01 \\
         \rowcolor{hl_color} \whitecell 
         & \textbf{HD-PCP-MixD($\mathbf{\betav=0.2}$)} & \textbf{0.90} & \textbf{0.89} & \textbf{1.85} & \textbf{0.01} \\
         & CHR-NNet    & 0.90  & 0.89 & 3.17  & 0.01 \\
         & DistSplit   & 0.90  & 0.89 & 3.07  & 0.01 \\
         & CDSplit-MixD & 0.92 & 0.91 & 2.23 & 0.01 \\
         & DCP         & 0.90  & 0.88 & 3.1   & 0.02 \\
         & CQR2        & 0.90  & 0.88 & 3.14  & 0.02 \\
         \bottomrule
    \end{tabular}}
    \caption{Best results of real data experiments (the best variant of each method for each dataset is selected). }
    \label{tab:real_summary}
\end{table}

\begin{table}
    \centering
    \resizebox{0.85\textwidth}{!}{
    \begin{tabular}{llcccc}
         \toprule
         Data & Method & Marg. Coverage & Cond. Coverage &  Set Size Mean & Set Size SE \\
         \midrule
         bike & {PCP-SIVI}    & {0.90} & {0.86} & {134.55} & {1.13} \\
         & PCP-GAN     & 0.90 & 0.88 & 399.32 & 2.48 \\
         & PCP-QRF     & 0.90 & 0.87 & 241.11 & 3.35 \\
         & PCP-MixD    & 0.90 & 0.87 & 128.13 & 0.53 \\
         \rowcolor{hl_color} \whitecell
         & \textbf{HD-PCP-MixD($\mathbf{\betav=0.2}$)} & \textbf{0.90} & \textbf{0.88} & \textbf{102.92} & \textbf{0.48} \\
         & PCP-KMN     & 0.90 & 0.88 & 172.92 & 1.04 \\
         & HD-PCP-KMN($\beta=0.2$)  & 0.90 & 0.88 & 146.24 & 1.06 \\
         & CHR-NNet    & 0.90 & 0.89 & 353.51 & 1.59 \\
         & CHR-QRF      & 0.90 & 0.88 & 204.10 & 1.03 \\
         & DistSplit   & 0.90 & 0.87 & 423.13 & 1.51 \\
         & CDSplit-KMN & 0.92 & 0.91 & 161.16 & 0.72 \\
         & CDSplit-MixD & 0.92 & 0.92 & 115.74 & 0.50 \\
         & DCP         & 0.90 & 0.88 & 443.76 & 1.36 \\
         & CQR         & 0.90 & 0.89 & 403.88 & 0.86 \\
         & CQR2        & 0.90 & 0.88 & 416.75 & 1.57 \\
         \midrule
         bio & PCP-SIVI    & 0.90 & 0.89 & 14.08 & 0.06 \\
         & PCP-GAN     & 0.90 & 0.89 & 13.11 & 0.05 \\
         & PCP-QRF     & 0.90 & 0.89 & 11.08 & 0.16 \\
         & PCP-MixD    & 0.90 & 0.89 & 11.47 & 0.04 \\
         & HD-PCP-MixD($\beta=0.2$) & 0.90 & 0.90 & 10.06 & 0.05 \\
         & PCP-KMN     & 0.90 & 0.89 & 10.30 & 0.05 \\
         \rowcolor{hl_color} \whitecell
         & \textbf{HD-PCP-KMN($\mathbf{\betav=0.2}$)}  & \textbf{0.90} & \textbf{0.89} & \textbf{8.76}  & \textbf{0.05} \\
         & CHR-NNet    & 0.90 & 0.89 & 11.74 & 0.04 \\
         & CHR-QRF      & 0.90 & 0.89 & 10.21 & 0.04 \\
         & DistSplit   & 0.90 & 0.89 & 13.19 & 0.04 \\
         & {CDSplit-KMN} & {0.90} & {0.90} & {9.13}  & {0.04} \\
         & CDSplit-MixD & 0.90 & 0.90 & 9.58 & 0.04 \\
         & DCP         & 0.90 & 0.89 & 12.95 & 0.04 \\
         & CQR         & 0.90 & 0.89 & 13.00 & 0.02 \\
         & CQR2        & 0.90 & 0.89 & 12.88 & 0.05 \\
         \midrule
         blog & PCP-SIVI    & 0.90  & 0.85 & 11.21   & 0.32  \\
         & PCP-GAN  & 0.90  & 0.86 & 11.67   & 0.16  \\
         \rowcolor{hl_color}
         \whitecell & \textbf{PCP-QRF}     & \textbf{0.89} & \textbf{0.85} & \textbf{3.68}    & \textbf{0.65} \\
         & PCP-MixD    & 0.90 & 0.85 & 10.78 & 0.17 \\
         
         & HD-PCP-MixD($\beta=0.2$) & 0.90 & 0.87 & 9.44  & 0.19 \\
         & PCP-KMN     & 0.90 & 0.85 & 10.67 & 0.13 \\
         & HD-PCP-KMN($\beta=0.2$)  & 0.90 & 0.86 & 10.51 & 0.14 \\
         & CHR-NNet    & 0.90  & 0.88 & 11.1    & 0.19  \\
         & CHR-QRF      & 0.90  & 0.87 & 10.81   & 0.17  \\
         & DistSplit   & 0.90  & 0.87 & 16.27   & 0.23  \\
         & CDSplit-KMN & 0.96 & 0.95 & 45.90    & 0.62  \\
         & CDSplit-MixD & 0.96 & 0.95 & 39.00 & 0.40 \\
         & DCP         & 0.90  & 0.88 & 1422.36 & 0.03  \\
         & CQR         & 0.90  & 0.87 & 15.15   & 0.26  \\
         & CQR2        & 0.90  & 0.87 & 13.91   & 0.27 \\
         \midrule
         facebook1 & PCP-SIVI    & 0.90  & 0.83 & 8.8     & 0.06 \\
         & PCP-GAN     & 0.90  & 0.85 & 9.22    & 0.05 \\
         \rowcolor{hl_color} \whitecell
         & \textbf{PCP-QRF}     & \textbf{0.90}  & \textbf{0.82} & \textbf{4.52}    & \textbf{0.76} \\
         & PCP-MixD    & 0.90 & 0.82 & 9.99 & 0.14 \\
         & HD-PCP-MixD($\beta=0.2$) & 0.90 & 0.85 & 8.93 & 0.12 \\
         & PCP-KMN     & 0.90 & 0.82 & 10.60 & 0.06 \\
         & HD-PCP-KMN($\beta=0.2$)  & 0.90 & 0.83 & 8.62 & 0.06 \\
         & CHR-NNet    & 0.90  & 0.87 & 9.96    & 0.11 \\
         & CHR-QRF      & 0.90  & 0.86 & 11.21   & 0.12 \\
         & DistSplit   & 0.90  & 0.89 & 14.03   & 0.16 \\
         & CDSplit-KMN & 0.95 & 0.94 & 33.88   & 0.19 \\
         & CDSplit-MixD & 0.95 & 0.95 & 33.69 & 0.16 \\
         & DCP         & 0.90  & 0.89 & 1303.01 & 0.04 \\
         & CQR         & 0.90  & 0.89 & 13.79   & 0.15 \\
         & CQR2        & 0.90  & 0.88 & 12.17   & 0.15 \\
         \bottomrule
    \end{tabular}}
    \caption{Detailed results of experiments on data: bike, bio, blog and facebook1. }
    \label{tab:real_data_1}
\end{table}

\begin{table}
    \centering
    \resizebox{0.85\textwidth}{!}{
    \begin{tabular}{llcccc}
         \toprule
         Data & Method & Marg. Coverage & Cond. Coverage &  Set Size Mean & Set Size SE\\
         \midrule
         facebook2 & PCP-SIVI    & 0.90  & 0.83 & 8.69    & 0.17 \\
         & PCP-GAN     & 0.90  & 0.84 & 9.47    & 0.1  \\
         \rowcolor{hl_color} \whitecell
         & \textbf{PCP-QRF}     & \textbf{0.90}  & \textbf{0.82} & \textbf{3.62}    & \textbf{0.72} \\
         & PCP-MixD    & 0.90 & 0.82 & 9.93 & 0.11 \\
         & HD-PCP-MixD($\beta=0.2$) & 0.90 & 0.84 & 8.84 & 0.10 \\
         & PCP-KMN     & 0.90 & 0.81 & 10.42 & 0.07 \\
         & HD-PCP-KMN($\beta=0.2$)  & 0.90 & 0.82 & 8.34 & 0.07 \\
         & CHR-NNet    & 0.90  & 0.87 & 10.15   & 0.13 \\
         & CHR-QRF      & 0.90  & 0.87 & 10.81   & 0.14 \\
         & DistSplit   & 0.90  & 0.89 & 13.48   & 0.19 \\
         & CDSplit-KMN & 0.95 & 0.94 & 44.53   & 0.26 \\
         & CDSplit-MixD & 0.97 & 0.96 & 45.75 & 0.16 \\
         & DCP         & 0.90  & 0.89 & 1963.68 & 0.03 \\
         & CQR         & 0.90  & 0.89 & 13      & 0.17 \\
         & CQR2        & 0.90  & 0.89 & 11.41   & 0.17 \\
         \midrule
         meps19 & PCP-SIVI    & 0.90  & 0.85 & 26.93  & 0.3  \\
         & {PCP-GAN} & {0.90}  & {0.86} & {18.41}   & {0.17} \\
         & PCP-QRF  & 0.90  & 0.86 & 20.16  & 0.48 \\
         & PCP-MixD    & 0.90 & 0.87 & 19.28 & 0.16 \\
         \rowcolor{hl_color} \whitecell
         & \textbf{HD-PCP-MixD($\mathbf{\betav=0.2}$)} & \textbf{0.90} & \textbf{0.88} & \textbf{17.78} & \textbf{0.18} \\
         & PCP-KMN     & 0.90 & 0.85 & 23.24 & 0.21 \\
         & HD-PCP-KMN($\beta=0.2$)  & 0.90 & 0.84 & 23.48 & 0.20 \\
         & CHR-NNet & 0.90  & 0.90  & 20.17  & 0.2  \\
         & {CHR-QRF} & {0.90}  & {0.89} & {18.26}  & {0.15} \\
         & DistSplit   & 0.90  & 0.89 & 29.96  & 0.28 \\
         & CDSplit-KMN & 0.93 & 0.91 & 31.10  & 0.33 \\
         & CDSplit-MixD & 0.93 & 0.92 & 23.86 & 0.17 \\
         & DCP         & 0.90  & 0.88 & 559.23 & 0.01 \\
         & CQR         & 0.90  & 0.89 & 28.71  & 0.18 \\
         & CQR2        & 0.90  & 0.89 & 30.78  & 0.36 \\
        \midrule
        meps20 & PCP-SIVI    & 0.90  & 0.86 & 23.87  & 0.16 \\
         & PCP-GAN     & 0.90  & 0.86 & 19.92  & 0.18 \\
         & PCP-QRF     & 0.90  & 0.86 & 20.47  & 0.52 \\
         & PCP-MixD    & 0.90 & 0.87 & 19.52 & 0.16 \\
         \rowcolor{hl_color} \whitecell
         & \textbf{HD-PCP-MixD($\mathbf{\betav=0.2}$)} & \textbf{0.90} & \textbf{0.88} & \textbf{18.19} & \textbf{0.17} \\
         & PCP-KMN     & 0.90 & 0.85 & 22.96 & 0.17 \\
         & HD-PCP-KMN($\beta=0.2$)  & 0.90 & 0.85 & 23.35 & 0.18 \\
         & CHR-NNet    & 0.90  & 0.9  & 19.43  & 0.18 \\
         \rowcolor{hl_color} \whitecell
         & \textbf{CHR-QRF}      & \textbf{0.90}  & \textbf{0.90}  & \textbf{17.94}  & \textbf{0.18} \\
         & DistSplit   & 0.90  & 0.90  & 29.35  & 0.23 \\
         & CDSplit-KMN & 0.93 & 0.90  & 29.05  & 0.30  \\
         & CDSplit-MixD & 0.92 & 0.92 & 22.93 & 0.16 \\
         & DCP         & 0.90  & 0.88 & 520.25 & 0.01 \\
         & CQR         & 0.90  & 0.90  & 27.57  & 0.15 \\
         & CQR2        & 0.90  & 0.90  & 29.94  & 0.31 \\
         \midrule
         meps21 & PCP-SIVI    & 0.90  & 0.85 & 23.74  & 0.27 \\
         & PCP-GAN     & 0.90  & 0.86 & 19.73  & 0.16 \\
         & {PCP-QRF}     & {0.89} & {0.86} & {18.52} & {0.45} \\
         & PCP-MixD    & 0.90 & 0.87 & 19.18 & 0.12 \\
         \rowcolor{hl_color} \whitecell
         & \textbf{HD-PCP-MixD($\mathbf{\betav=0.2}$)} & \textbf{0.90} & \textbf{0.88} & \textbf{17.91} & \textbf{0.15} \\
         & PCP-KMN     & 0.90 & 0.85 & 23.13 & 0.17 \\
         & HD-PCP-KMN($\beta=0.2$)  & 0.90 & 0.85 & 23.70 & 0.19 \\
         & CHR-NNet    & 0.90  & 0.90  & 20.07  & 0.22 \\
         & {CHR-QRF}      & {0.90}  & {0.90}  & {18.65} & {0.16} \\
         & DistSplit   & 0.90  & 0.89 & 30.32  & 0.31 \\
         & CDSplit-KMN & 0.92 & 0.91 & 30.42  & 0.41 \\
         & CDSplit-MixD & 0.93 & 0.92 & 23.63 & 0.17 \\
         & DCP         & 0.90  & 0.88 & 531.25 & 0.01 \\
         & CQR         & 0.90  & 0.89 & 29.89  & 0.2  \\
         & CQR2        & 0.90  & 0.89 & 31.78  & 0.36 \\
         \bottomrule
    \end{tabular}}
    \caption{Detailed results of experiments on data: facebook2, meps19, meps20 and meps21.}
    \label{tab:real_data_2}
\end{table}

\begin{table}
    \centering
    \resizebox{0.9\textwidth}{!}{
    \begin{tabular}{llcccc}
         \toprule
         Data & Method & Marg. Coverage & Cond. Coverage &  Set Size Mean & Set Size SE\\
         \midrule
        temperature & PCP-SIVI    & 0.90  & 0.90  & 3.27 & 0.06 \\
         & PCP-GAN     & 0.90  & 0.89 & 3.51  & 0.04 \\
         & PCP-QRF     & 0.88 & 0.86 & 3.78  & 0.09 \\
         & PCP-MixD    & 0.90 & 0.90 & 2.10 & 0.01 \\
         \rowcolor{hl_color} \whitecell
         & \textbf{HD-PCP-MixD($\mathbf{\betav=0.2}$)} & \textbf{0.90} & \textbf{0.89} & \textbf{1.85} & \textbf{0.01} \\
         & PCP-KMN     & 0.90 & 0.89 & 2.68 & 0.01 \\
         & HD-PCP-KMN($\beta=0.2$) & 0.90 & 0.89 & 2.43 & 0.01 \\
         & CHR-NNet    & 0.90  & 0.89 & 3.17  & 0.01 \\
         & CHR-QRF      & 0.90 & 0.89 & 3.24  & 0.01 \\
         & DistSplit   & 0.90  & 0.89 & 3.07  & 0.01 \\
         & CDSplit-KMN & 0.91 & 0.90  & 2.84  & 0.02 \\
         & CDSplit-MixD & 0.92 & 0.91 & 2.23 & 0.01 \\
         & DCP         & 0.90  & 0.88 & 3.1   & 0.02 \\
         & CQR         & 0.90  & 0.87 & 3.55  & 0.03 \\
         & CQR2        & 0.90  & 0.88 & 3.14  & 0.02 \\
         \bottomrule
    \end{tabular}}
    \caption{Detailed results of experiments on data: temperature.}
    \label{tab:real_data_3}
\end{table}

\begin{table}
    \centering
    \resizebox{0.9\textwidth}{!}{
    \begin{tabular}{llcccc}
         \toprule
         Synthetic Data & Method & Cond. Coverage & Marg. Coverage &  Set Size Mean & Set Size SE \\
         \midrule
         \rowcolor{hl_color} \whitecell $\rho=0$
         & HD-PCP-MixD  & 0.92 (0.03)  & 0.90 (0.01) & 356.82 & 10.78 \\    
         & PCP-MixD   & 0.94 (0.02)  & 0.90 (0.01) & 412.40 & 12.70 \\
         & CHR-NNet    & 0.88 (0.05)  & 0.92 (0.01) & 690.49 & 51.55 \\
         & DistSplit   & 0.93 (0.01)  & 0.92 (0.01) & 714.92 & 54.81 \\
         & CDSplit-MixD & 0.87 (0.03)  & 0.90 (0.01) & 437.98 & 20.20 \\
         & DCP         & 0.90 (0.03)  & 0.92 (0.01) & 710.64 & 53.99 \\
         & CQR         & 0.95 (0.02)  & 0.92 (0.01) & 667.01 & 51.47 \\
         & CQR2         & 0.89 (0.03)  & 0.92 (0.01) & 694.64 & 52.40 \\
         \midrule
         \rowcolor{hl_color} \whitecell $\rho=5$
         & HD-PCP-MixD & 0.88 (0.03)  & 0.89 (0.01) & 302.64 & 9.88 \\    
         & PCP-MixD   & 0.87 (0.03)  & 0.88 (0.01) & 348.93 & 10.70 \\ 
         & CHR-NNet    & 0.92 (0.03)  & 0.91 (0.01) & 689.22 & 57.47 \\
         & DistSplit   & 0.93 (0.02)  & 0.91 (0.01) & 702.56 & 54.02 \\
         & CDSplit-MixD & 0.87 (0.03)  & 0.90 (0.01) & 415.86 & 17.49 \\
         & DCP         & 0.89 (0.05)  & 0.91 (0.02) & 697.18 & 54.01 \\
         & CQR         & 0.87 (0.03)  & 0.91 (0.02) & 653.64 & 53.59 \\
         & CQR2         & 0.88 (0.05)  & 0.91 (0.02) & 693.10 & 56.18 \\         
         \midrule
         \rowcolor{hl_color} \whitecell $\rho=9$
         & HD-PCP-MixD & 0.83 (0.07)  & 0.89 (0.02) & 171.61 & 8.36 \\    
         & PCP-MixD   & 0.89 (0.03)  & 090 (0.01) & 205.63 & 5.76 \\          
         & CHR-NNet    & 0.92 (0.02)  & 0.92 (0.02) & 664.31 & 51.59 \\
         & DistSplit   & 0.91 (0.05)  & 0.91 (0.01) & 689.72 & 59.96 \\
         & CDSplit-MixD & 0.92 (0.02)  & 0.91 (0.01) & 416.86 & 16.96 \\
         & DCP         & 0.90 (0.03)  & 0.91 (0.02) & 666.13 & 50.61 \\
         & CQR         & 0.93 (0.04)  & 0.92 (0.01) & 639.00 & 50.19 \\
         & CQR2         & 0.92 (0.02)  & 0.91 (0.01) & 663.54 & 54.59 \\         
         \bottomrule
    \end{tabular}}
    \caption{Detailed results for multidimensional target synthetic dataset. The set size for PCP and HD-PCP decreases when $\rho$ increases while the set sizes for other baselines are similar for different $\rho$ since the marginal distribution remains the same.}
    \label{tab:md_syn_data}
\end{table}

\section{Additional Plots for NYC Taxi Data} \label{sec:taxiapp}
\Cref{fig:taxiaddtion_app} shows the additonal plots for NYC Taxi data for PCP, HD-PCP, CHR and CDSplit. Each row representsone individual record in the test set and we show the predictive set generated by each algorithm. Clearly, PCP and HD-PCP generate the most informative predictive set where popular neighborhoods and airports are tagged, while HD-PCP offers a more sparse and clean set. As expected, CHR offers a wide predictive set. Both CDSplit and CHR fail to provide predictive set with clear interpretation. \Cref{fig:taxi_app} shows the predictive set and heatmap for pickup from SOHO and Chinatown. Most popular spots in NYC have higher density, which means passengers are more likely to be dropped off there. 

\begin{figure}[ht]
    \centering
     \begin{subfigure}[b]{0.24\textwidth}
         \includegraphics[width=\textwidth]{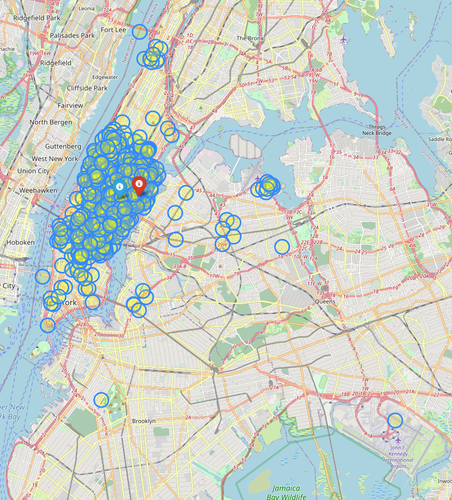}
     \caption{P1: PCP}
     \end{subfigure} 
     \begin{subfigure}[b]{0.24\textwidth}
         \includegraphics[width=\textwidth]{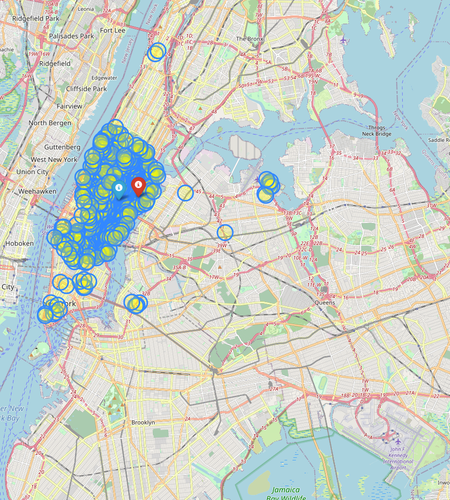}
     \caption{P1: HD-PCP}
     \end{subfigure}
     \begin{subfigure}[b]{0.24\textwidth}
         \includegraphics[width=\textwidth]{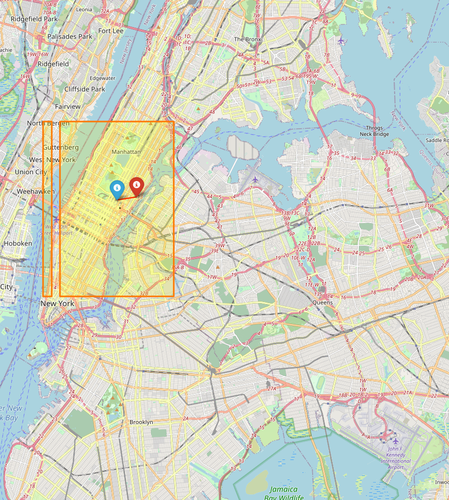}
     \caption{P1: CDSplit}
     \end{subfigure}
     \begin{subfigure}[b]{0.24\textwidth}
         \includegraphics[width=\textwidth]{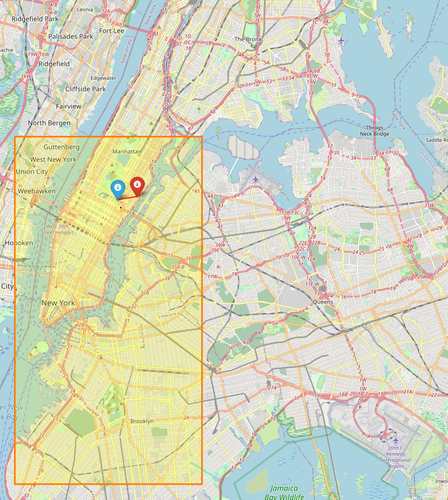}
     \caption{P1: CHR}
     \end{subfigure} \\
    \begin{subfigure}[b]{0.24\textwidth}
        \includegraphics[width=\textwidth]{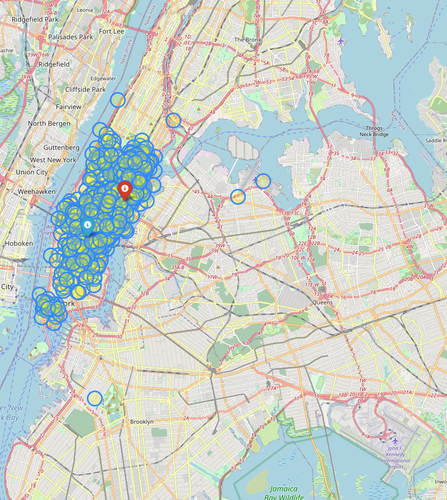}
     \caption{P2: PCP}
     \end{subfigure} 
     \begin{subfigure}[b]{0.24\textwidth}
         \includegraphics[width=\textwidth]{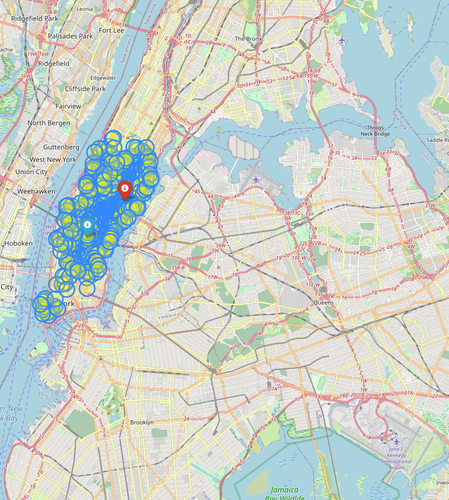}
     \caption{P2: HD-PCP}
     \end{subfigure}
     \begin{subfigure}[b]{0.24\textwidth}
         \includegraphics[width=\textwidth]{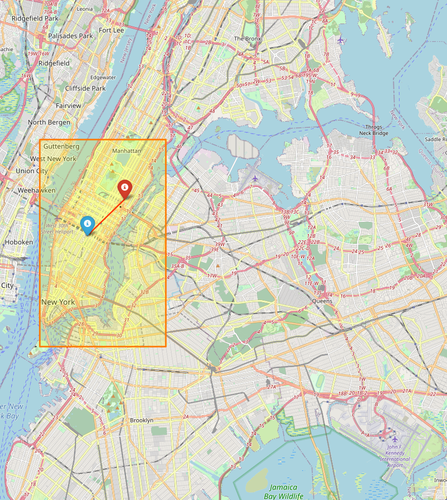}
     \caption{P2: CDSplit}
     \end{subfigure}
     \begin{subfigure}[b]{0.24\textwidth}
         \includegraphics[width=\textwidth]{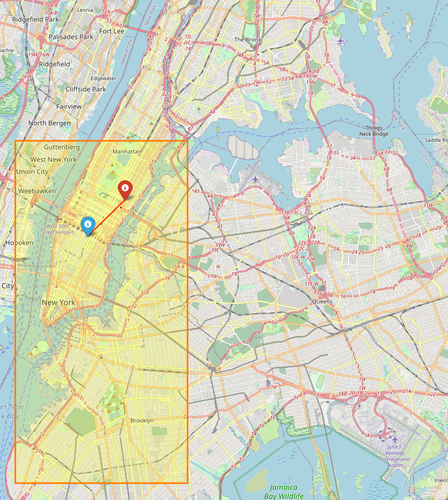}
     \caption{P2: CHR}
     \end{subfigure} \\
    \begin{subfigure}[b]{0.24\textwidth}
        \includegraphics[width=\textwidth]{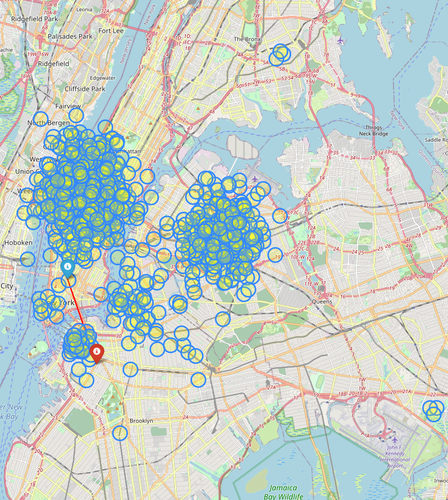}
     \caption{P3: PCP}
     \end{subfigure} 
     \begin{subfigure}[b]{0.24\textwidth}
         \includegraphics[width=\textwidth]{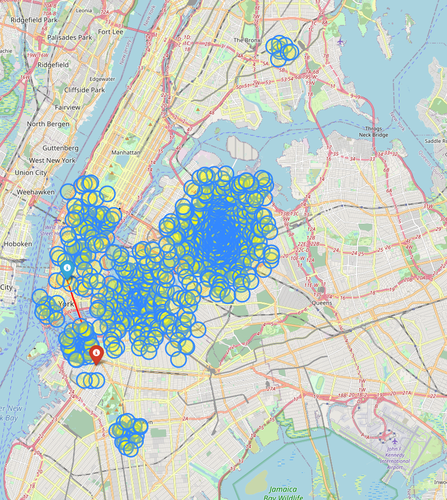}
     \caption{P3: HD-PCP}
     \end{subfigure}
     \begin{subfigure}[b]{0.24\textwidth}
         \includegraphics[width=\textwidth]{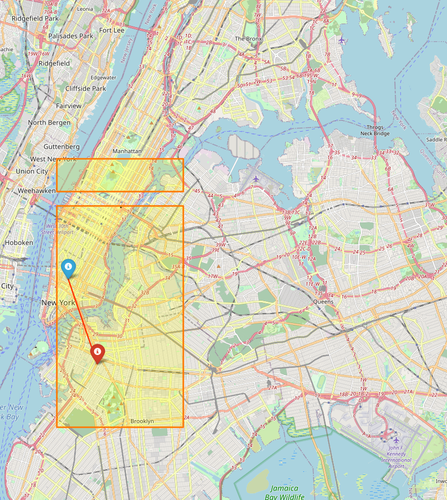}
     \caption{P3: CDSplit}
     \end{subfigure}
     \begin{subfigure}[b]{0.24\textwidth}
         \includegraphics[width=\textwidth]{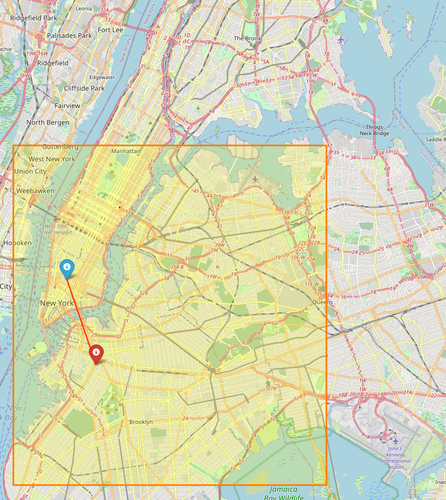}
     \caption{P3: CHR}
     \end{subfigure}
    \caption{NYC Taxi data. Red Pin: pickup location; Blue Pin: dropoff location. Left to right: Predictive set output by PCP, HD-PCP, CDSplit and CHR. Each row represents a random selected individual record sampled from the dataset.}
    \label{fig:taxiaddtion_app}
\end{figure}

\begin{figure}[ht]
    \centering
     \begin{subfigure}[b]{0.24\textwidth}
         \includegraphics[width=\textwidth]{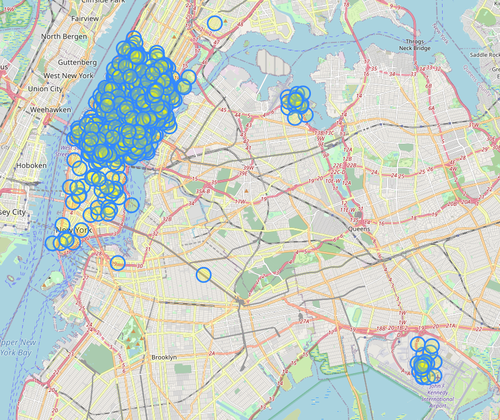}
     \caption{PCP}
     \label{fig:taxipcp}
     \end{subfigure} 
     \begin{subfigure}[b]{0.24\textwidth}
         \includegraphics[width=\textwidth]{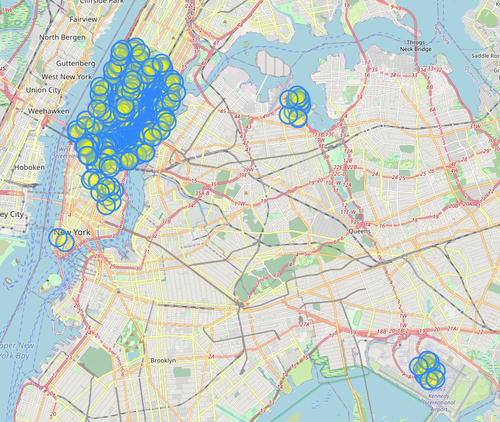}
     \caption{HD-PCP}
     \label{fig:taxihdpcp}
     \end{subfigure}
     \begin{subfigure}[b]{0.24\textwidth}
         \includegraphics[width=\textwidth]{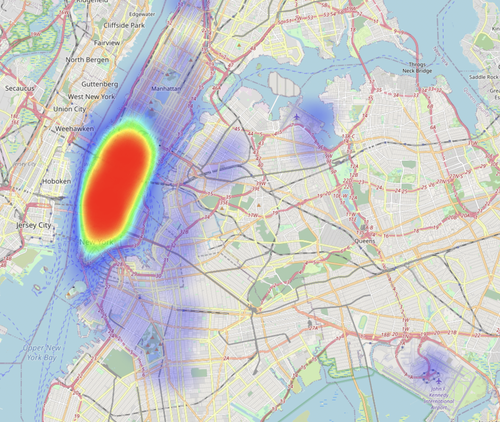}
     \caption{SOHO}
     \label{fig:taxisoho}
     \end{subfigure}
     \begin{subfigure}[b]{0.24\textwidth}
         \includegraphics[width=\textwidth]{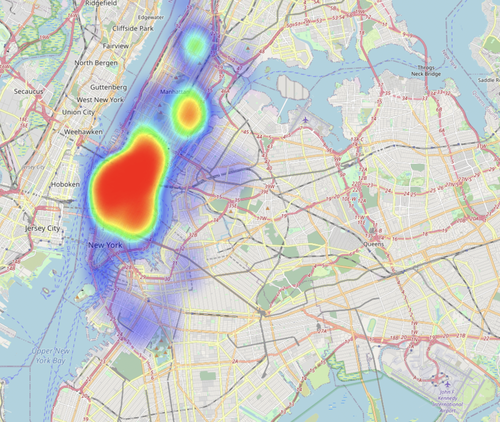}
     \caption{Chinatown}
     \label{fig:taxichinatown}
     \end{subfigure}
    \caption{NYC Taxi data. (a), (b): predictive set for an individual using PCP and HD-PCP; (c): predictive set for riders from SoHo; (d): predictive set for riders from Chinatown.}
    \label{fig:taxi_app}
\end{figure}

\section{Additional Results for Multi-Target Regression Task}\label{sec:addmultiapp}

We include more experiment results for multi-target regression task in this section. We use two datasets for river flow prediction \citep{spyromitros2016multi} and stock prediction from StatLib repository. River flow dataset predicts the rivernetwork flows for future 48 hours for 8 sites (8 targets) and the stock dataset has stock price for 10 aerospace companies and we try to predict 3 companies' price using remaining companies'. There are 64 features including past river flow information for river flow predictions. For train, calibration and test size, we use 6925, 2000 and 200 for river flow prediction and 750, 100 and 100 for stock prediction respectively.

Since the Monte-Carlo estimation of overlapping hypersphere suffers from curse of dimensionality. We convert each dataset into two-dimensional pairwise comparisions to evaluate the robustness of each method (8 targets result in 28 pairs). We plot the pairwise comparison of PCP and HD-PCP against CHR and CDSplit, the two baselines that performs generally the best among other datasets. We use Mixture density Network for CDSplit, PCP and HD-PCP and Neural Netork based CHR, the results are averaged over 5 runs. 

For X-axis, we plot the set size of PCP / HD-PCP and for Y-axis, we plot the set size for CDSplit and CHR, and we also show the $Y=X$ line. If all points fall into the left region, it means PCP / HD-PCP outputs a sharper predictive set. For PCP, almost all pointsfall into the left region, which indicates PCP has a better or comparable performance with CDSplit and CHR in all pairwise comparisions. HD-PCP has a much better performance and the points all fall into the far left part in the figure, which shows HD-PCP offers a much sharper predictive set. 

\begin{figure}[ht]
    \centering
     \begin{subfigure}[b]{0.24\textwidth}
      \makebox[0pt][r]{\makebox[20pt]{\raisebox{30pt}{\rotatebox[origin=c]{90}{River Flow}}}}%
         \includegraphics[width=\textwidth]{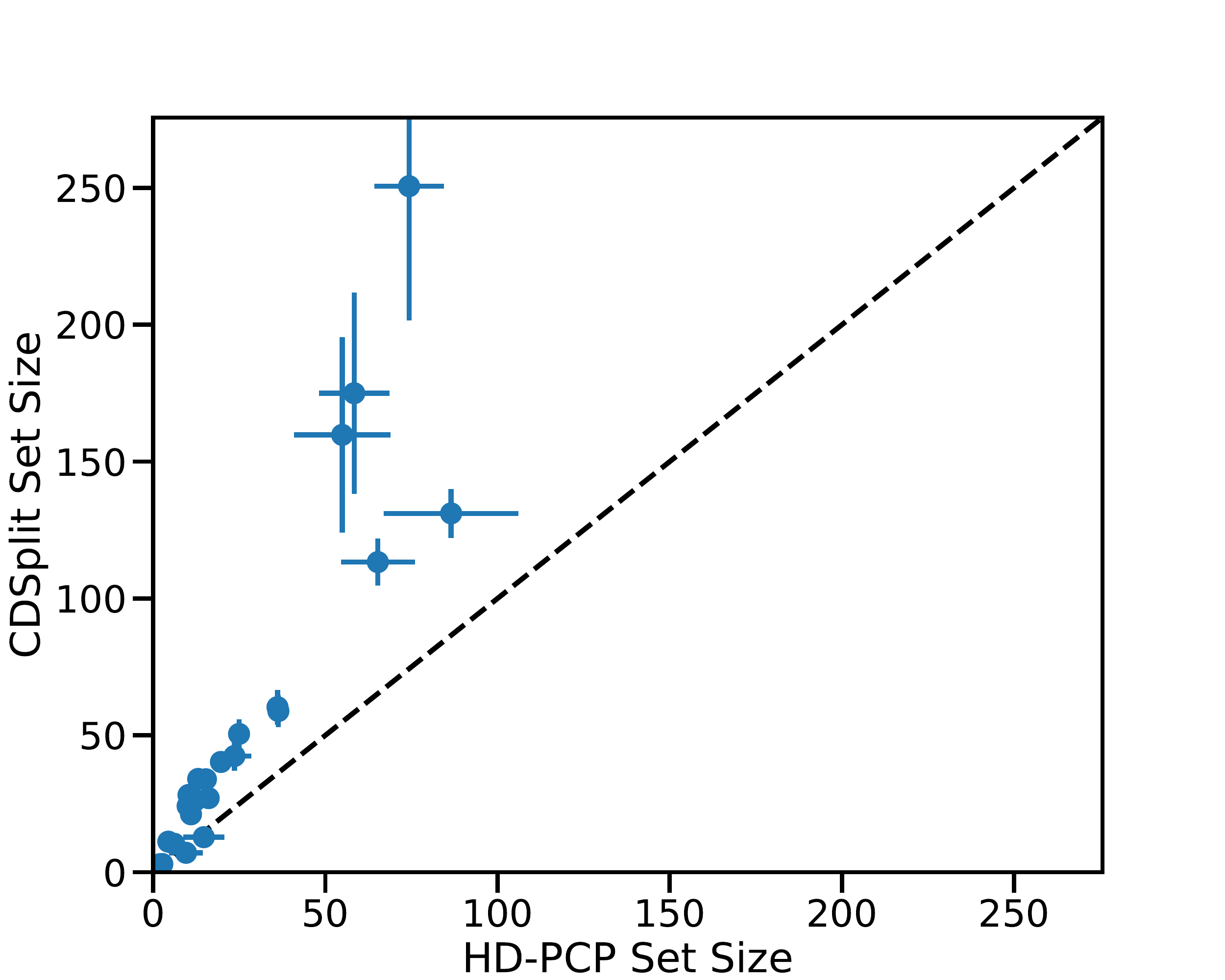}
     \caption{CDSplit vs HD-PCP}
     \end{subfigure} 
     \begin{subfigure}[b]{0.24\textwidth}
         \includegraphics[width=\textwidth]{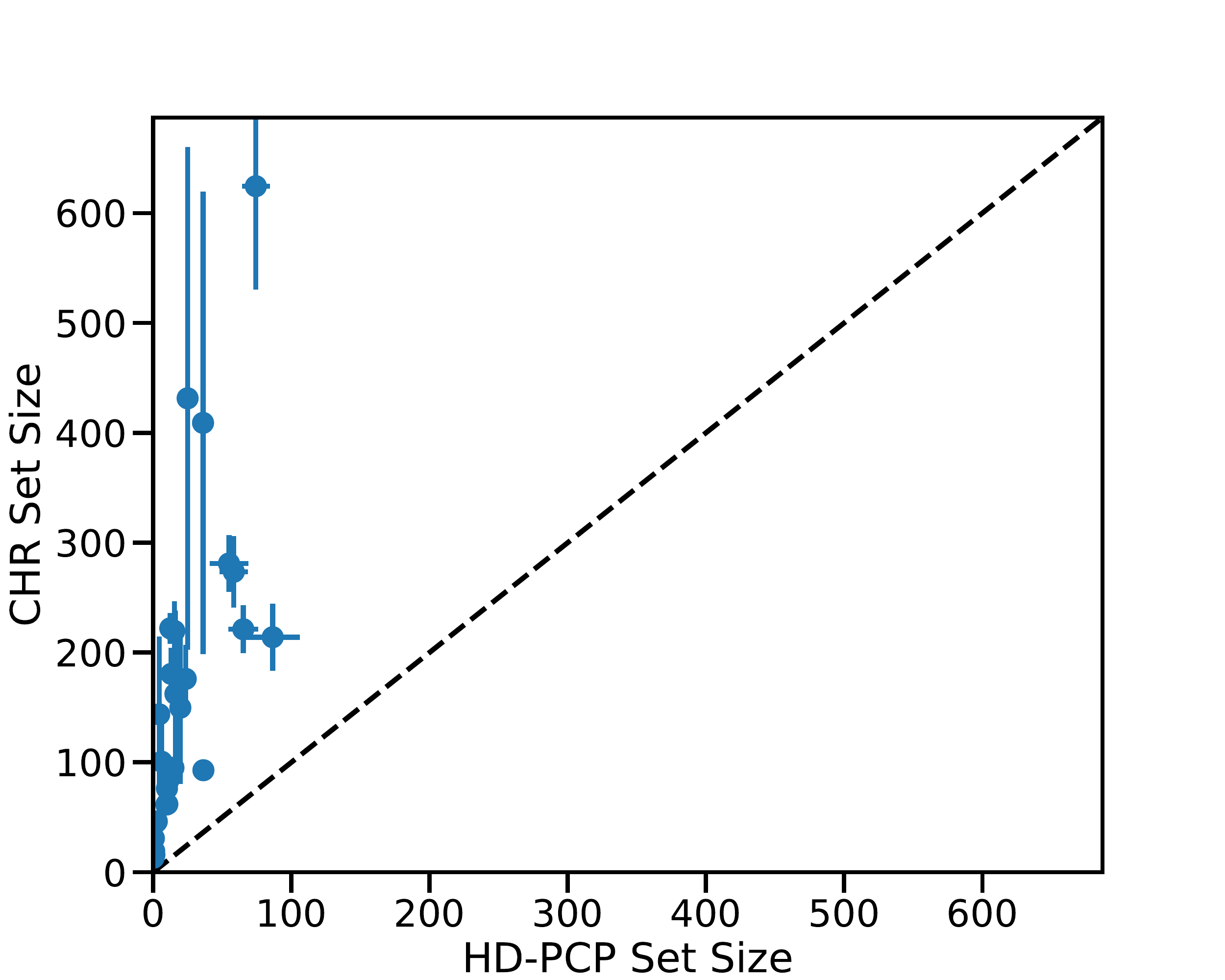}
     \caption{CHR vs HD-PCP}
     \end{subfigure}
     \begin{subfigure}[b]{0.24\textwidth}
         \includegraphics[width=\textwidth]{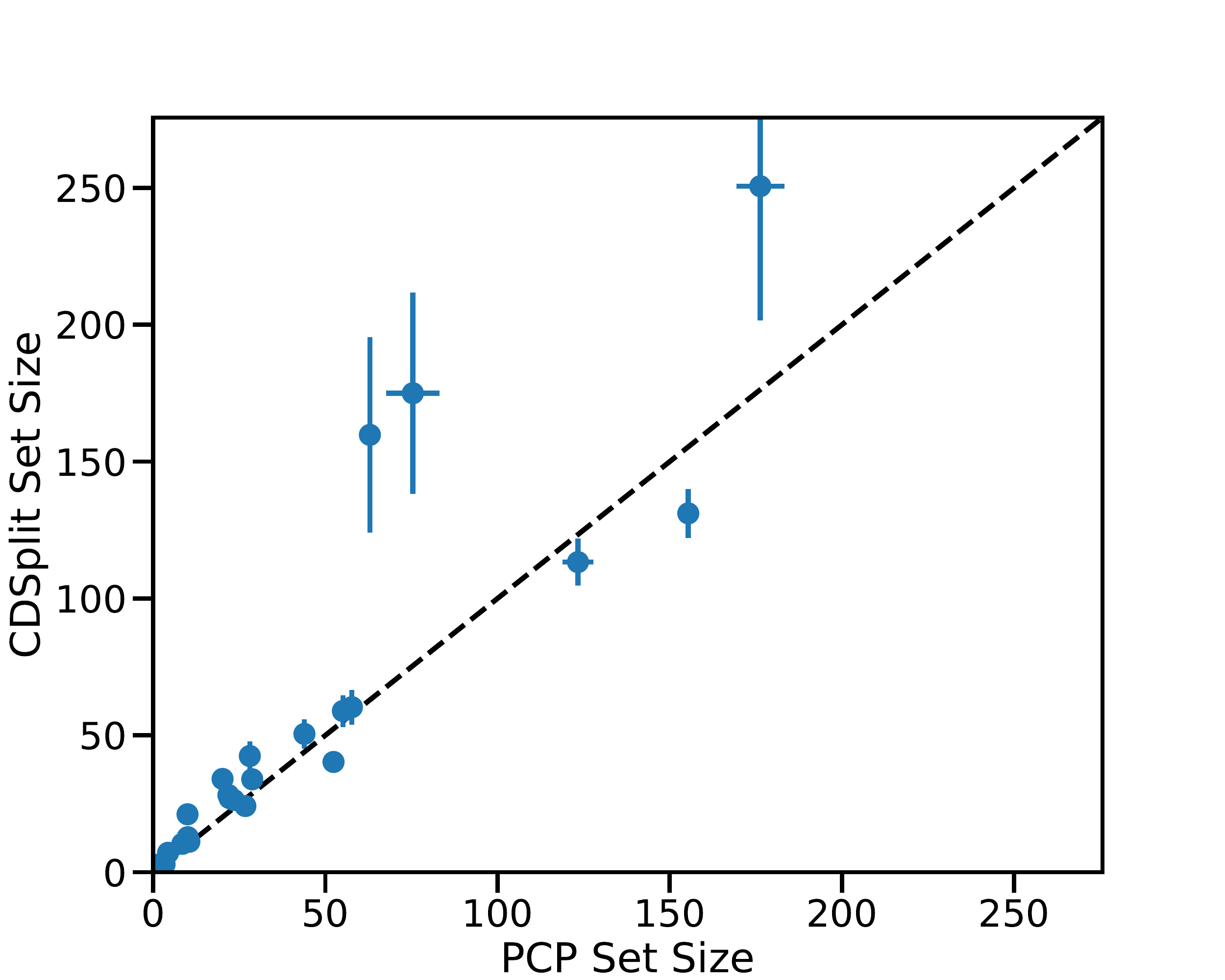}
     \caption{CDSplit vs PCP}
     \end{subfigure}
     \begin{subfigure}[b]{0.24\textwidth}
         \includegraphics[width=\textwidth]{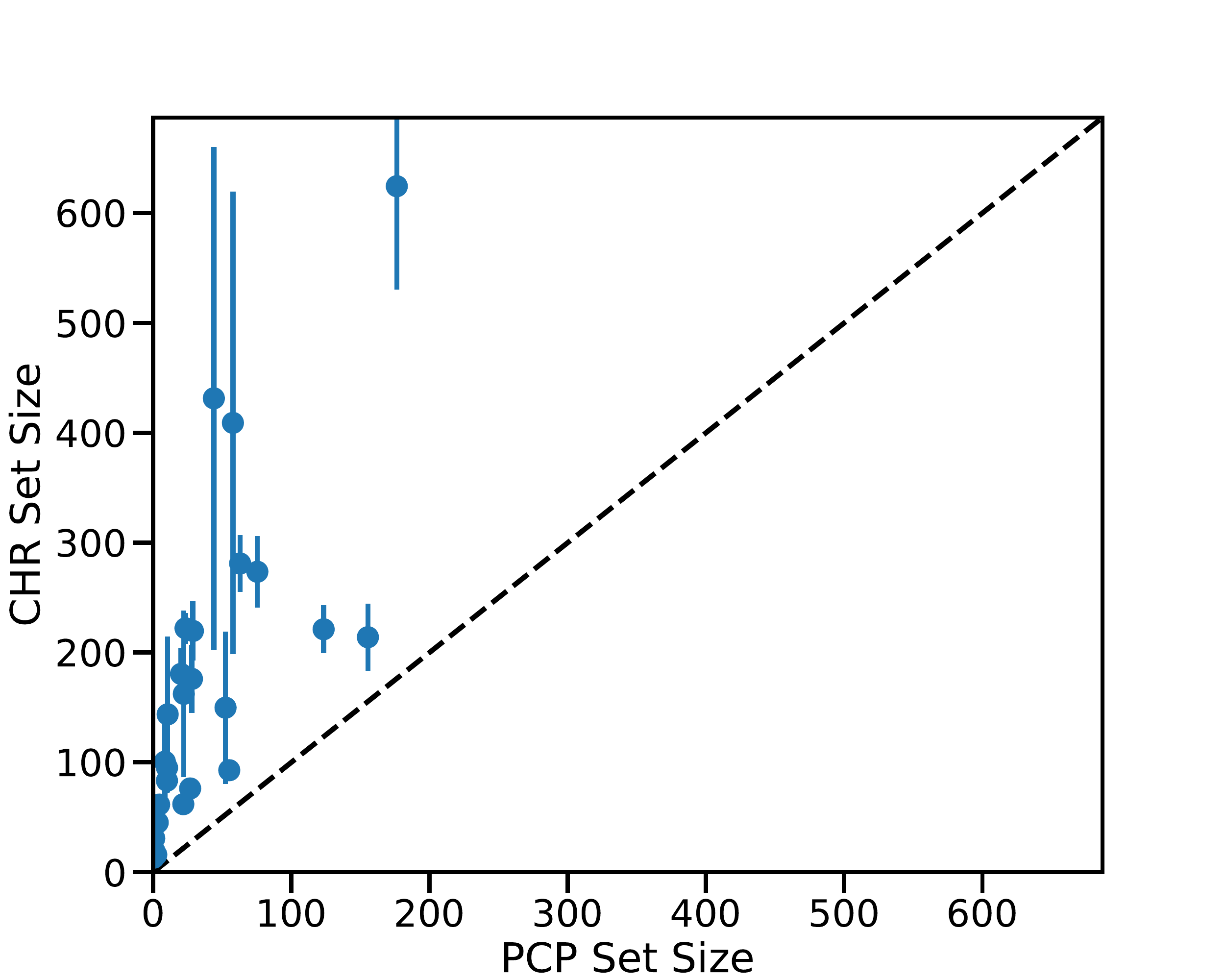}
     \caption{CHR vs PCP}
     \end{subfigure} \\
     \begin{subfigure}[b]{0.24\textwidth}
    \makebox[0pt][r]{\makebox[20pt]{\raisebox{30pt}{\rotatebox[origin=c]{90}{Stock}}}}%
         \includegraphics[width=\textwidth]{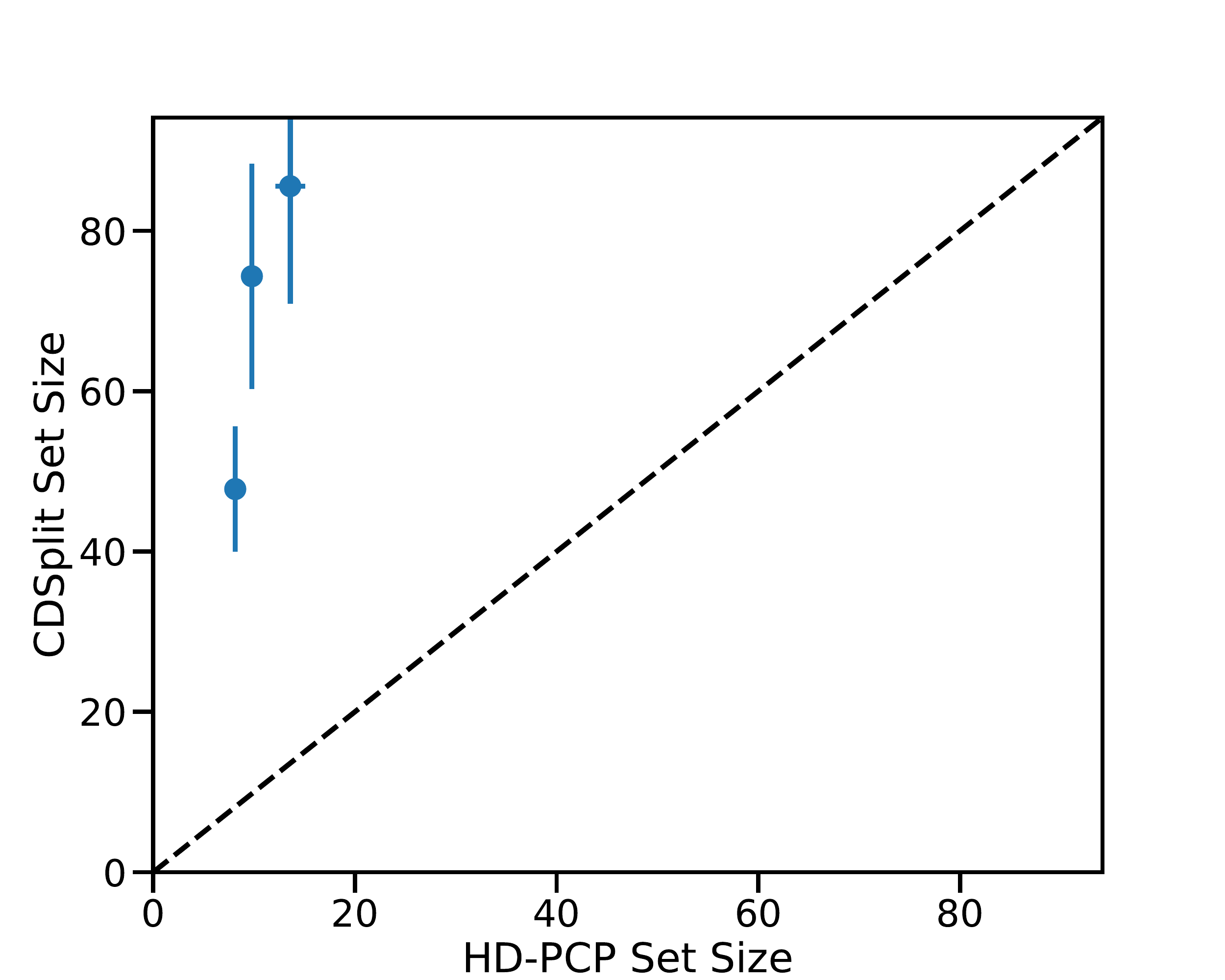}
     \caption{CDSplit vs HD-PCP}
     \end{subfigure} 
     \begin{subfigure}[b]{0.24\textwidth}
         \includegraphics[width=\textwidth]{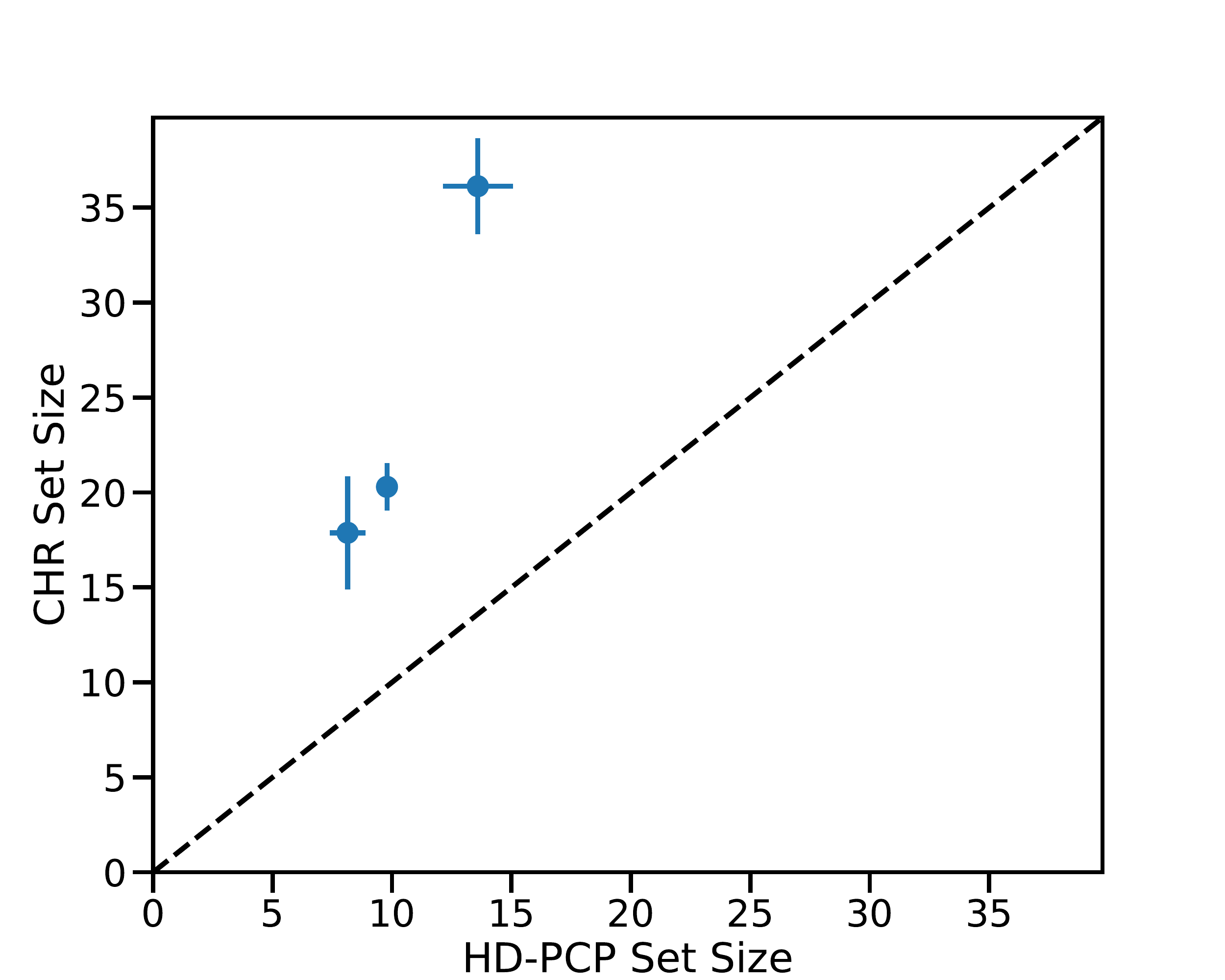}
     \caption{CHR vs HD-PCP}
     \end{subfigure}
     \begin{subfigure}[b]{0.24\textwidth}
         \includegraphics[width=\textwidth]{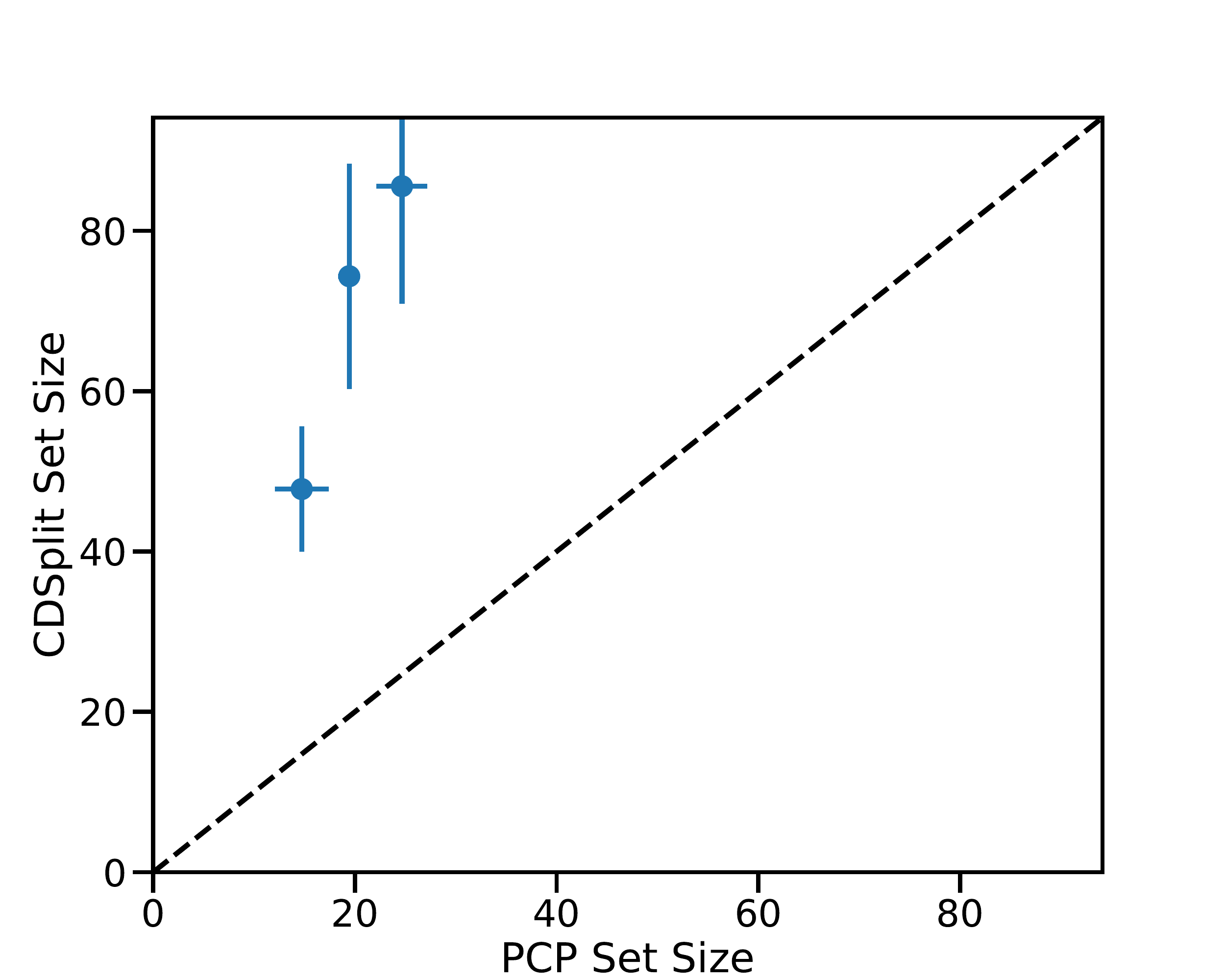}
     \caption{CDSplit vs PCP}
     \end{subfigure}
     \begin{subfigure}[b]{0.24\textwidth}
         \includegraphics[width=\textwidth]{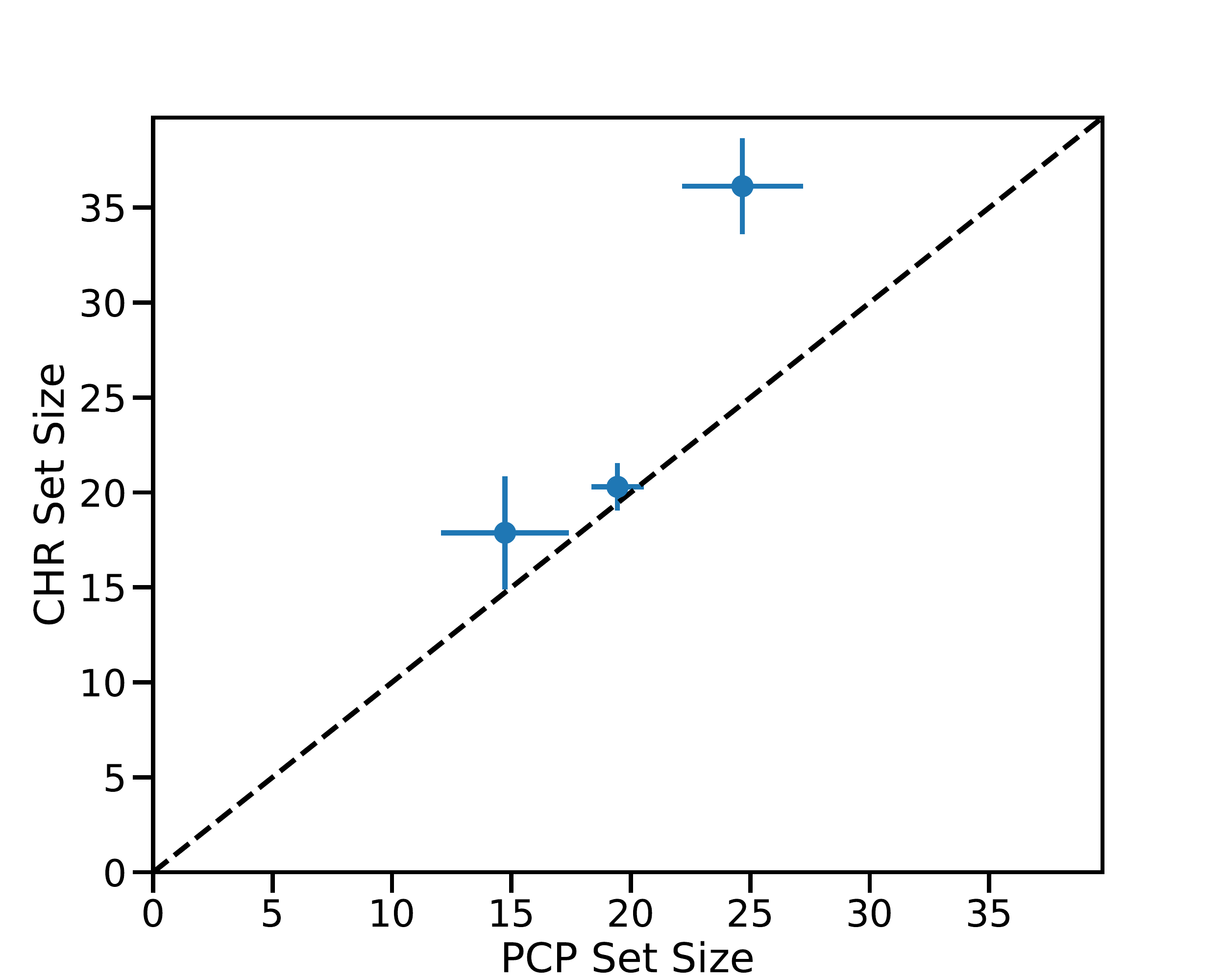}
     \caption{CHR vs PCP}
     \end{subfigure}      
    \caption{Additional Results for Multi-Target Regression Task. Each point corresponds to the size of predictive set for two elements of the target vector. }
    \label{fig:pairwiseapp}
\end{figure}
\end{document}